\title{Breadth-First Search vs. Restarting Random Walks for Escaping Uninformed Heuristic Regions}
\author{
    Daniel Platnick\textsuperscript{\rm $\mathsection$, $\dagger$,}\footnote{Work done while at Toronto Metropolitan University},
    Dawson Tomasz\textsuperscript{$\dagger$, $\ddagger$},
    Eamon Earl\textsuperscript{$\dagger$, $\ddagger$},
    Sourena Khanzadeh\textsuperscript{$\dagger$},
    Richard Valenzano\textsuperscript{$\dagger$, $\ddagger$}
}
\newtheorem{theorem}{Theorem}[section]
\newtheorem*{theorem*}{Theorem}
\newtheorem{corollary}[theorem]{Corollary}
\newtheorem{lemma}[theorem]{Lemma}
\newtheorem{observation}{Observation}[section]
\theoremstyle{definition}
\newtheorem{definition}{Definition}[section]
\newcommand{\expect}{\mathbb{E}}
\newcommand{\prob}{\mathbb{P}}
\newcommand{\tuple}[1]{\langle{#1}\rangle}
\newcommand{\OPEN}{\mathrm{OPEN}}
\newcommand{\CLOSED}{\mathrm{CLOSED}}
\newcommand{\parent}{\mathrm{parent}}
\newcommand{\NONE}{\mathrm{NONE}}
\newcommand{\task}{\mathcal{T}}
\newcommand{\states}{\mathcal{S}}
\newcommand{\transitions}{\Delta}
\newcommand{\goaltest}{\Gamma}
\newcommand{\true}{\mathrm{True}}
\newcommand{\false}{\mathrm{False}}
\newcommand{\initstate}{s_{\mathcal{I}}}
\newcommand{\goaldepth}{d^*}
\newcommand{\goaldepthstates}{S_{\goaldepth}}
\newcommand{\shallowstates}{S_{<\goaldepth}}
\newcommand{\numgoals}{g}
\newcommand{\bfactor}{b}
\newcommand{\bfsrv}{B}
\newcommand{\walklength}{\ell}
\newcommand{\rrwrv}{R^C_\walklength}
\newcommand{\successprob}{p_g}
\newcommand{\goaldepthprob}{p_{d^*}}
\newcommand{\crrw}{RRW$^C_\walklength$}
\newcommand{\crrwspace}{\crrw~}
\newcommand{\lubyrrw}{RRW$^\mathcal{L}$}
\newcommand{\lubyrrwspace}{\lubyrrw~}
\begin{document}

\maketitle

\begin{abstract}
Greedy search methods like Greedy Best-First Search (GBFS) and Enforced Hill-Climbing (EHC) often struggle when faced with Uninformed Heuristic Regions (UHRs) like heuristic local minima or plateaus. 
In this work, we theoretically and empirically compare two popular methods for escaping UHRs in breadth-first search (BrFS) and restarting random walks (RRWs). 
We first derive the expected runtime of escaping a UHR using BrFS and RRWs, based on properties of the UHR and the random walk procedure, and then use these results to identify when RRWs will be faster in expectation than BrFS. 
We then evaluate these methods for escaping UHRs by comparing standard EHC, which uses BrFS to escape UHRs, to variants of EHC called EHC-RRW, which use RRWs for that purpose. EHC-RRW is shown to have strong expected runtime guarantees in cases where EHC has previously been shown to be effective. We also run experiments with these approaches on PDDL planning benchmarks to better understand their relative effectiveness for escaping UHRs.
\end{abstract}

\section{Introduction}
When given a reasonably accurate heuristic function, greedy algorithms like \emph{Greedy Best First Search (GBFS)} \cite{doran:gbfs} and \emph{Enhanced Hill-Climbing (EHC)} \cite{hoffmannff_2001} can be effective at solving planning problems.
However, when using a flawed heuristic function, these methods can become stalled due to \textit{Uninformative Heuristic Regions (UHRs)} in which the heuristic provides no or flawed guidance.
Notably, EHC is explicitly designed to perform a Breadth-First Search (BrFS) to find a way out of UHRs.
When GBFS gets stuck in a plateau --- which is a UHR in which all states have the same heuristic value --- it too will degenerate into BrFS when using low-g tiebreaking.

\emph{Restarting Random Walks (RRW)} have also effectively been used to escape UHRs.
In GBFS, RRWs have been initiated when the search stops seeing improvement in the heuristic values of the states encountered \cite{xie:local_rws}.
Alternatively, Arvand is an EHC-like local search that uses RRWs to progress through the state-space \cite{nakhost:arvand,nakhost:arvand_2nd_gen}.

In practice, BrFS-based approaches are more effective on some problems and RRW-based are more effective on others.
But more work is needed to improve our understanding of why these differences occur or when we would expect either method to be the better option.
This work aims to help address this gap, with the goal of making it clearer when RRWs should be deployed, or when they should be used alongside BrFS in an \emph{algorithm portfolio}.
To that end, we make the following contributions:

\begin{enumerate}
    \item We identify expected runtimes for BrFS and \emph{constant-depth RRWs} for escaping a UHR with uniformly distributed exits. These results are given in terms of the size of the UHR and the \emph{success probability}, which is the probability that a single random walk escapes the UHR.
    \item We show that RRWs will be faster in expectation than BrFS if the success probability is larger than the ratio of the number of states at shallower depths than the first escape state and the number of states at the shallowest escape depth, called the \emph{goal depth}. We give a lower bound for directed trees using unbiased random walks on the number of escapes at the goal depth that guarantees the success probability is high enough for RRWs to be faster.
    \item We compare variants of EHC that use constant-depth RRWs or the popular Luby restart policy \cite{luby:restarts} instead of BrFS to escape UHRs. We show that these methods have strong expected runtime guarantees in cases where EHC is known to be complete or have a polynomial runtime for STRIPS planning.
    \item We empirically compare EHC with the RRW variants on PDDL problems to show how our theoretical results apply in practice, and how different state-space topological features relatively influence EHC and the RRW variants.
\end{enumerate}

\section{Preliminaries}
In this section, we introduce our terminology and notation, and describe the algorithms and methods analyzed below.

\subsection{Search Tasks and State-Space Topologies}

A \emph{state-space search task} $\task$ is defined by the tuple $\task = \tuple{\states, \initstate, \transitions, \goaltest}$, where $\states$ is a finite set of \emph{states}, $\initstate$ is the initial state, $\transitions : \states \rightarrow 2^{\states}$ is the \emph{state transition function}, and $\goaltest: \states \rightarrow \lbrace \true, \false \rbrace$ is the \emph{goal test function}.
If $s'$ is in $\transitions(s)$, we refer to $s'$ as a \emph{successor} of $s$.
When $\transitions$ is used to find the successors of $s$, we say those successors are \emph{generated}.

A path $P = \tuple{s_0,\ldots, s_k}$ is a sequence of states where $s_i \in \transitions(s_{i -1})$ for every $0 < i \leq k$.
The objective of a given task $\task$ is to find a \emph{solution path}, where $s_0 = \initstate$ and $s_k$ is a \emph{goal state} (\textit{i.e.} $\goaltest(s_k) = \true$).
As our focus is on \emph{satisficing} search --- in which we want to find any solution regardless of cost --- we ignore transition costs in this work.
In addition, we let $last(P)$ denote the last state on $P$ (\textit{i.e.} $last(P) = s_k$).
If $P' = \tuple{s_0', \ldots, s_j'}$, then $P + P'$ is the concatenation of the paths $P$ and $P'$.
For brevity, we abuse notation and let $P + P' = \tuple{s_0, ..., s_{k-1}, s_0', \ldots s_j'}$ if the last state on $P$ is the same as the first state on $P'$ (\textit{i.e.} $s_k = s_0'$).

We now define several important state-space properties.
For any $s \in \states$, the \emph{depth} of $s$ is the number of transitions in the shortest path from $\initstate$ to $s$.
For example, $\initstate$ has a depth of $0$, any state in $\transitions(\initstate)$ has a depth of $1$, etc.
A state $s$ is called a \emph{dead end} if no goal state is reachable from $s$.
The \emph{goal depth} $\goaldepth$ of a task $\task$ is defined as the minimum depth of any goal state.
We also denote the set of unique states with a depth strictly less than $\goaldepth$ as $\shallowstates \subseteq \states$, and the set of unique states with a depth exactly equal to $\goaldepth$ as $\goaldepthstates \subseteq \states$.

A \emph{state-space topology} is a pair $\tuple{\task,h}$, where $\task$ is a search task and $h: \states \rightarrow \mathbb{Z}^{\geq 0} \cup \lbrace \infty \rbrace$ is a \emph{heuristic function}.
Below, we assume that $h$ never incorrectly identifies a state as a dead end, meaning if $h(s) = \infty$, then no goal state is reachable from $s$.
While $h$ will ideally provide useful search guidance, \emph{Uninformative Heuristic Regions} (UHRs) do occur.
A UHR around any state $s \in \states$ is the set of states reachable from $s$ along any path $P$ such that for any $s' \in P$, $h(s') \geq h(s)$.
That is, no ``heuristic progress" occurs while in a UHR.
A state $s_e$ in the UHR is called an \emph{exit} if $s_e$ has a successor $s''$ for which $h(s'') < h(s)$.
We refer to any such ``improving" successor $s''$ of $s_e$ as an \emph{escape state}.
The length of the shortest path from $s$ to any exit is also referred to as the \emph{exit distance} of the UHR.

\subsection{Search Algorithms and Methods}

We now briefly describe the search methods of focus below.

\begin{algorithm}[t]
\small
\begin{algorithmic}[1]
    \STATE \textbf{Input:} task $\tuple{\states, \initstate, \transitions, \goaltest}$, heuristic $h$
    \STATE $P \gets \tuple{\initstate}$, $s \gets \initstate$
    \WHILE{$\true$}
        \STATE $\task' \gets \tuple{\states, s, \transitions, \goaltest^h_s}$
        \STATE $P' \gets$ \texttt{brFS}($\task'$) \label{ehc:brfs_call}
        \IF{$P' = \tuple{}$}
            \RETURN $\tuple{}$~~~~~~\% solution not found
        \ENDIF
        \STATE $P \gets P + P'$, $s \gets last(P')$
        \IF{$\goaltest(last(P)) = \true$}
            \RETURN $P$
        \ENDIF
    \ENDWHILE
    \RETURN $\tuple{}$~~~~~~\% No solution found
\end{algorithmic}
\caption{Enforced Hill-Climbing}
\label{alg:ehc}
\end{algorithm}

\subsubsection{Breadth-First Search (BrFS).} We assume the reader's familiarity with BrFS, though pseudocode is given in Appendix \ref{sec:appendix_algs}.
Notably, we define BrFS as a \emph{best-first search} where a state's priority is giveb by its depth.
BrFS is also defined to perform a goal test on a state $s$ when it is generated, not when $s$ is selected for expansion.
BrFS will still find the shortest path when modified in this way.

\subsubsection{Enforced Hill-Climbing (EHC).} This local search method was originally used in the FF planner \cite{hoffmannff_2001}.
Given a state-space topology $\tuple{\task,h}$, EHC performs a sequence of BrFSs, each aiming to escape the current UHR (see Algorithm \ref{alg:ehc}).
Importantly, instead of using the goal test $\goaltest$ given for the overall task, each BrFS uses the following goal test function nstead:
\begin{align} \label{eq:uhr_goal_test}
    \goaltest^h_s(s') = 
    \begin{cases}
    \true, & \text{ if } \goaltest(s') \text{ or } h(s') < h(s) \\
    \false, & \text{ otherwise}
    \end{cases}
\end{align}
$\goaltest^h_s$ succeeds when either a goal state according to $\goaltest$ is found, or an escape state is found for the current UHR (\textit{i.e.} ``heuristic progress" is made).
Thus, EHC searches for a sequence of escape states until a goal state is reached.

\begin{algorithm}[t]
\small
\begin{algorithmic}[1]
    \STATE \textbf{Input:} task $\tuple{\states, \initstate, \transitions, \goaltest}$
    \IF{$\goaltest(\initstate) = \true$} \label{rrw:goal_test_start}
        \RETURN $\tuple{\initstate}$ ~~~~~~\% Single state path is solution
    \ENDIF
    \WHILE{$\true$}
        \STATE $P \gets \tuple{\initstate}$, $s \gets \initstate$, $\walklength \gets$ \texttt{getDepth()}, $d \gets 0$ \label{rrw:get_depth}
        \WHILE{$d < \walklength$ and $|\transitions(s)| > 0$}
            \STATE $s' \gets$ state sampled from $\transitions(s)$
            \STATE $P \gets P + \tuple{s'}$
            \IF{$\goaltest(s') = \true$} \label{rrw:goal_test}
                \STATE \textbf{return} $P$
            \ENDIF
            \STATE $s \gets s'$, $d \gets d + 1$ 
        \ENDWHILE
    \ENDWHILE
\end{algorithmic}
\caption{Restarting Random-Walks}
\label{alg:rrw}
\end{algorithm}

\subsubsection{Restarting Random Walks (RRWs).} A \emph{random walk} is a single path through a state-space that is generated stochastically (lines 5 to 12 of Algorithm \ref{alg:rrw}).
At every step of the walk, a successor of the last state is sampled and added to the current path.
A random walk terminates when either a goal state is found, a state without any successors is encountered, or some maximum depth is reached.
A random walk is said to be \emph{unbiased} if the states are sampled uniformly over the set of possible successors.
We also let $0 \leq \successprob \leq 1$ denote the \emph{success probability} that the random walk will reach a goal.
Note that $\successprob$ may depend on the structure of the state-space (\textit{i.e.} the distribution of goals) or the way successors are sampled for the random walk (\textit{i.e.} unbiased or biased).

A \emph{restarting random walk (RRW)} performs a sequence of random walks, each starting from $\initstate$ (see Algorithm \ref{alg:rrw}).
An RRW terminates when any random walk reaches a goal state.
The maximum length of each random walk is determined by a call to 
\texttt{getDepth()} (line \ref{rrw:get_depth}).
When using \emph{constant-depth RRWs}  --- denoted as \crrwspace --- \texttt{getDepth()} always returns the same integer constant $\walklength > 0$.

\subsubsection{The Luby Restart Policy.} This strategy, which alters the depth limit from walk to walk, was originally defined for a general class of stochastic algorithms \cite{luby:restarts}.
In the context of random walks, \citeauthor{luby:restarts} showed that for any random walk procedure, there exists a constant $\walklength^* > 0$, such that always restarting after $\walklength^*$ steps has the minimum expected runtime over all possible \emph{non-adaptive} restart policies.
This means random walks are independent, and the restart policy does not change based on what is encountered during these walks.

Unfortunately, determining $\walklength^*$ for a given problem requires full knowledge of the runtime distribution of a single infinite length random walk on that problem.
As this is not known prior to search, \citeauthor{luby:restarts} introduced a general restart policy for unknown runtime distributions.
We omit the full details of the policy, but note that it is based on a sequence whose first 15 values are $\left\langle 1,1,2,1,1,2,4,1,1,2,1,1,2,4,8,\dots \right\rangle$.
The resulting Luby restart policy is used for RRWs by having the length of the $i$-th random walk be the $i$-th value in the sequence.
We refer to this algorithm as \lubyrrw.
Importantly, \citeauthor{luby:restarts} showed that if $T^*$ is the expected runtime when always restarting after $\walklength^*$ steps, then the expected runtime when using the Luby sequence is $O(T^*\log T^*)$. 

Intuitively, this approach performs longer and longer random walks to allow the search to reach deep goals if needed, while continually performing short walks to ensure shallow goals are not missed.
In practice it is common to multiply all values in the sequence by some integer constant $m \geq 1$ to reach greater depths faster. This approach has the same runtime guarantees as when using the original sequence.

\section{Expected Runtime Analysis} \label{runtime}

In this section, we characterize the expected runtime of BrFS and \crrwspace in terms of task size and random walk properties.
We then find bounds the success probability that guarantees that \crrwspace will be faster in expectation than BrFS.
This result is further refined in the case of unbiased random walks on a directed tree.
We conclude the section with a discussion of the implications and limitations of this analysis.

We note that our results are given in terms of solving a search task, not just escaping a UHR.
However, they also cover this case because the problem of escaping a UHR starting at state $s$ can be modeled as a search task whose objective is to find a state with a lower heuristic value than $h(s)$ (\textit{i.e.} by using the goal test function in Equation \ref{eq:uhr_goal_test}).

Below, use $\bfsrv(\task)$ and $\rrwrv(\task)$ for the random variables (RVs) of the runtime of BrFS and \crrw, respectively.
We measure runtime in terms of the number of goal tests performed or equivalently, the number of states generated.

\subsection{Expected Runtimes for BrFS and \crrw}

We begin with the following result for BrFS when the goal states are uniformly distributed at the goal depth:

\begin{theorem} \label{thm:brfs_exp}
If $\task$ has $\numgoals \geq 1$ goal states uniformly distributed among the $|\goaldepthstates|$ states at the goal depth, then
\begin{align*}
    \expect[\bfsrv(\task)] = |\shallowstates| + (|\goaldepthstates| + 1)/(\numgoals + 1)
\end{align*}
\end{theorem}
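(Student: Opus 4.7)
The plan is to decompose BrFS's runtime on $\task$ into the work done at depths strictly less than $\goaldepth$ and the work done at depth exactly $\goaldepth$. Since BrFS explores in depth order with duplicate detection and performs goal tests at generation, all $|\shallowstates|$ states at shallower depths are generated (each exactly once) before any depth-$\goaldepth$ state is reached. None of these can be a goal by the definition of $\goaldepth$ as the minimum goal depth, so this first phase deterministically contributes $|\shallowstates|$ goal tests to the runtime.

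Next, I would fix a deterministic tiebreaking rule for BrFS and let $s_1, \ldots, s_N$ denote the order in which the $N := |\goaldepthstates|$ depth-$\goaldepth$ states would be generated absent any early termination. This order is a function only of the transition structure of $\task$ and the tiebreaking rule; in particular, it does not depend on which of the depth-$\goaldepth$ states are tagged as goals. Hence, if $T$ denotes the index of the first goal in this deterministic sequence, we have $\bfsrv(\task) = |\shallowstates| + T$, and by linearity of expectation it suffices to show $\expect[T] = (N+1)/(\numgoals+1)$.

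Since the $\numgoals$ goals are placed uniformly among the $N$ depth-$\goaldepth$ states, the subset of indices in $\{1,\ldots,N\}$ that are goals is a uniform random $\numgoals$-subset, and $T$ is its minimum. I would finish using the standard ``gaps'' argument: the $N - \numgoals$ non-goal indices are split by the goals into $\numgoals + 1$ gaps whose joint distribution is exchangeable, so each has expected size $(N - \numgoals)/(\numgoals + 1)$; since $T$ equals one plus the size of the first gap, this yields $\expect[T] = (N+1)/(\numgoals+1)$. The only real subtlety is the independence claim in the second paragraph --- making sure that BrFS's ordering of depth-$\goaldepth$ generations cannot be influenced by the random goal placement --- which is immediate once the tiebreaking rule is fixed and one observes that neither successors nor priorities depend on the goal test. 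Everything else is routine.
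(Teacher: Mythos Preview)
Your proof is correct and structurally identical to the paper's: both decompose the runtime as $|\shallowstates|$ plus the number of depth-$\goaldepth$ states tested until the first goal, and both reduce the latter to computing the expected position of the first goal when $\numgoals$ goals are placed uniformly among $N=|\goaldepthstates|$ positions. The only difference is in how that expectation is evaluated. You use the exchangeable-gaps argument (the $\numgoals$ goals split the $N-\numgoals$ non-goals into $\numgoals+1$ exchangeable gaps, so the first gap has expected size $(N-\numgoals)/(\numgoals+1)$, giving $\expect[T]=(N+1)/(\numgoals+1)$). The paper instead uses indicator variables: for each of the $N-\numgoals$ non-goal states it defines $Z_i$ as the indicator that this state is tested before all $\numgoals$ goals, argues $\prob[Z_i]=1/(\numgoals+1)$ by symmetry among the $\numgoals+1$ relevant states, and sums to get $\expect[X]=1+(N-\numgoals)/(\numgoals+1)$. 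Both are standard routes to the same identity; your explicit articulation of why the BrFS generation order at depth $\goaldepth$ is independent of the random goal placement is a nice touch that the paper leaves more implicit.
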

\begin{proof}
Let $X$ be the number of goal tests that BrFS performs on states at depth $\goaldepth$. Since BrFS examines all states shallower than $\goaldepth$ and none deeper than $\goaldepth$, it follows that $\bfsrv(\task) = |\shallowstates| + X$.
Thus $\expect[\bfsrv(\task)] = |\shallowstates| + \expect[X]$ since $|\shallowstates|$ is a constant.
We also note that  $1 \leq X \leq |\goaldepthstates|$ even if the goals are not uniformly distributed, since at least one state at the goal depth will be tested, and at worst all states at depth $\goaldepth$ will be tested. This means that
\begin{align}
    |\shallowstates| + 1 \leq \bfsrv(\task) \leq |\shallowstates| + |\goaldepthstates| \label{eqn:brfs_upper_and_lower}
\end{align}

If the goal states are uniformly distributed at the goal depth, $\expect[X]$ is equivalent to the expected number of selections needed when randomly picking states from the goal depth without replacement, until one of the $\numgoals$ goal states is picked. 
Where $s_i$ is any one of the $(|\goaldepthstates|- \numgoals)$ non-goal states in $\goaldepthstates$, $Z_i$ be an indicator RV for the event that $s_i$ is picked before any of the $\numgoals$ goals.
Therefore, $\expect[X] = \expect[Z_1 + ... + Z_{|\goaldepthstates| - \numgoals}] + 1$ since $X$ is the number of non-goal states tested plus one for the selected goal state.
Notice that $\prob[Z_i] = 1 / (\numgoals + 1)$ since there are $(\numgoals + 1)!$ ways of ordering the $(g + 1)$ states in the set containing $s_i$ and the $\numgoals$ goals, and $\numgoals!$ of these orderings start with $s_i$.
It thus holds that
\begin{align}
    \expect[X] &   = 1 + \expect[\sum_{i = 1}^{|\goaldepthstates| - \numgoals} Z_i] = 1 + \sum_{i = 1}^{|\goaldepthstates| - \numgoals} \expect[Z_i]\\
    & = 1 + (|\goaldepthstates| - \numgoals) / (\numgoals + 1) = (|\goaldepthstates| + 1) / (\numgoals + 1)\label{brfs_ex_proof:ind}
\end{align}
Line \ref{brfs_ex_proof:ind} holds since the $Z_i$ are indicator variables and so $\expect[Z_i]=\prob[Z_i]$, and since we are summing over $(|\goaldepthstates| - \numgoals)$ of RVs that all have the same expectation.
Adding this to $|\shallowstates|$ yields the desired result.
\end{proof}

The theorem shows that the expected runtime decreases as the density of goals at depth $\goaldepth$ (\textit{i.e.} $\numgoals/|\goaldepthstates|$) increases, since fewer states in $\goaldepthstates$ will likely need to be tested.
However, regardless of this density, BrFS must still exhaustively examine all states shallower than $\goaldepth$ (see equation \ref{eqn:brfs_upper_and_lower}).

We now turn to \crrw.
If the success probability of a single random walk is $\successprob = 0$ (\textit{i.e.} $\walklength < \goaldepth$), then $\expect[\rrwrv(\task)] = \infty$.
Otherwise, we can say the following:

\begin{theorem} \label{thm:rrw_exp}
If the success probability of a random walk to depth $\walklength$ on search task $\task$ is $\successprob > 0$, then
\begin{align*}
    \expect[\rrwrv(\task)] \leq \walklength/\successprob + 1
\end{align*}
\end{theorem}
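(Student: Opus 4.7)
The plan is to view the runtime as the total number of goal tests performed across all random walks and bound it by (number of walks) times (maximum cost per walk). Since each walk in Algorithm \ref{alg:rrw} restarts from $\initstate$ at line \ref{rrw:get_depth} and its stochastic choices are independent of previous walks, the walks are i.i.d., with each succeeding with probability $\successprob > 0$ by assumption. Letting $N$ denote the index of the first successful walk, $N$ is geometrically distributed with $\expect[N] = 1/\successprob$.

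Next, I would bound the cost of each individual walk. The inner loop of Algorithm \ref{alg:rrw} performs exactly one goal test per iteration (line \ref{rrw:goal_test}), and runs for at most $\walklength$ iterations before either reaching the depth limit, hitting a state with no successors, or finding a goal. So each walk performs at most $\walklength$ goal tests. Adding the single goal test on $\initstate$ performed before the outer loop (line \ref{rrw:goal_test_start}), the total runtime satisfies $\rrwrv(\task) \leq 1 + N\walklength$ deterministically. Taking expectations and applying linearity gives $\expect[\rrwrv(\task)] \leq 1 + \walklength \cdot \expect[N] = 1 + \walklength/\successprob$, which matches the bound in the theorem.

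The proof is short, so there is no substantial obstacle. The only point worth checking carefully is that the walks are genuinely i.i.d.: this follows because every outer iteration resets the current state to $\initstate$ and resamples successors from the same transition distribution independently of past iterations, so the success indicators of distinct walks are i.i.d. Bernoulli($\successprob$). It is also worth noting that the bound is a loose overestimate: the successful walk, as well as any walk terminating early at a dead end, performs strictly fewer than $\walklength$ goal tests. This looseness is precisely what forces the statement to be an inequality rather than an equality, and is why nothing sharper than $\walklength/\successprob + 1$ follows from this generality — the exact expected runtime would depend on the conditional distribution of the walk length given success, which is not specified at this level of abstraction.
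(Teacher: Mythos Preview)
Your proposal is correct and follows essentially the same approach as the paper: both recognize that the number of walks $N$ is geometric with mean $1/\successprob$, that each walk contributes at most $\walklength$ goal tests, and that the initial test of $\initstate$ adds the $+1$. The only minor difference is that you apply the bound pointwise (writing $\rrwrv(\task) \leq 1 + N\walklength$ as a random-variable inequality and then taking expectations), whereas the paper first writes the exact expectation $(1/\successprob - 1)\expect[\bar L] + \expect[L] + 1$ via a Wald-type split into failed and successful walk lengths and only then bounds $\expect[L],\expect[\bar L]\leq\walklength$; your route is slightly cleaner since it avoids the independence step altogether.
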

\begin{proof}
Let $Y$ be the RV for the \emph{number of random walks} it takes to find a goal when using \crrw, $L$ be the RV for the length of a random walk given it a goal, and $\bar{L}$ be the RV for the length of a random walk given it does not reach a goal.
\crrwspace will perform $(Y - 1)$ walks of length $\bar{L}$ and one random walk of length $L$, and so
\begin{align}
    \expect[\rrwrv(\task)]  &  = \expect[(Y - 1)\bar{L} + L + 1]\label{rrw_exp:first}\\
    & = \expect[Y - 1]\expect[\bar{L}] + \expect[L] + 1 \label{rrw_exp:ind}\\
    & = (1/\successprob -  1)\expect[\bar{L}] + \expect[L] + 1 \label{rrw_exp:geometric}\\
    & \leq \walklength/\successprob + 1
\end{align}
\noindent
The additional $1$ in line \ref{rrw_exp:first} comes from the single goal test of $\initstate$ on line \ref{rrw:goal_test_start} of Algorithm \ref{alg:rrw}.
The random walks themselves are independent and identically distributed (IID), and so the length of each walk that does not reach a goal is independent of the number performed. 
As such, $Y$ and $\bar{L}$ are independent and Line \ref{rrw_exp:ind} holds.
The IID property also means that $Y$ follows the geometric distribution, and so $\expect[Y] = 1/\successprob$ (line \ref{rrw_exp:geometric}).
The final line holds because all walks have a length of at most $\walklength$ and so $\expect[L]$ and $\expect[\bar{L}]$ are both at most $\walklength$.
\end{proof}

\subsection{Comparative Analysis}
\begin{figure*}[t!]
 	\centering
 	\begin{subfigure}[t]{0.3\textwidth}
 		\centering
 		\includegraphics[width=\linewidth]{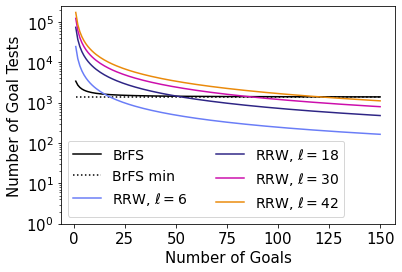} 
        \caption{Expected runtime for different numbers of goals where $\goaldepth=6$.}\label{fig:goal_behaviour}
 	\end{subfigure}
 	\hfill
 	\begin{subfigure}[t]{0.3\textwidth}
 		\centering
 		\includegraphics[width=\linewidth]{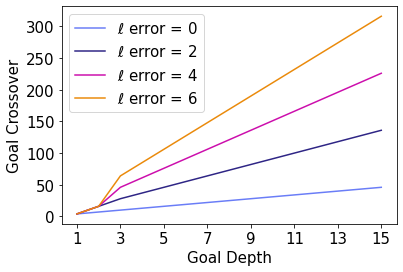} 
        \caption{The goal crossover point for different goal depths.}\label{fig:cutoff_behaviour}
 	\end{subfigure}
 	\hfill
 	\begin{subfigure}[t]{0.3\textwidth}
 		\centering
 		\includegraphics[width=\linewidth]{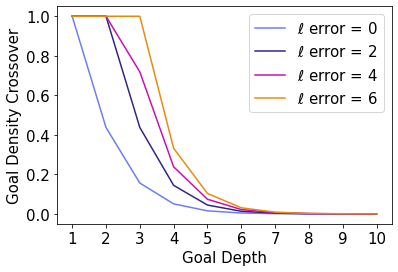} 
        \caption{The goal density crossover point for different goal depths.}\label{fig:density_behaviour}
 	\end{subfigure}
 	\caption{BrFS and \crrwspace with unbiased random walks on a directed tree with a branching factor of 4.}
\end{figure*}
To better understand Theorems \ref{thm:brfs_exp} and \ref{thm:rrw_exp}, consider a search task $\task_\bfactor$ on a directed tree with constant branching factor $\bfactor \geq 2$, and $\numgoals \geq 1$ goals uniformly distributed at depth $\goaldepth$.
There are $\bfactor^d$ states at every depth $d \geq 0$ of such a tree, meaning that $|\goaldepthstates| = \bfactor^{\goaldepth}$ and $|\shallowstates| = \bfactor^0 + \bfactor^1 + ... + \bfactor^{\goaldepth - 1} = (\bfactor^{\goaldepth} - 1)/(\bfactor - 1)$.  
If we are using unbiased random walks length $\ell \geq \goaldepth$ on such a tree, then $\successprob = \numgoals/\bfactor^{\goaldepth}$. Therefore, Theorems \ref{thm:brfs_exp}, and \ref{thm:rrw_exp} imply the following
\begin{align}
    \expect[\bfsrv(\task_\bfactor)] &= (\bfactor^{\goaldepth} - 1)/(\bfactor - 1) + (\bfactor^{\goaldepth} + 1) / (\numgoals + 1) \label{eqn:brfs_trees_exp} \\
    \expect[\rrwrv(\task)] & \leq \walklength \bfactor^{\goaldepth}/\numgoals + 1 \label{eqn:crrw_trees_exp}
\end{align}

Figure \ref{fig:goal_behaviour} shows the expected runtime of BrFS and \crrwspace with unbiased walks on such a tree --- as calculated using equations \ref{eqn:brfs_trees_exp} and \ref{eqn:crrw_trees_exp} --- with a constant branching factor $\bfactor=4$, $\goaldepth=6$, and different numbers of goals uniformly distributed among the 4096 states at depth $6$. 
BrFS is significantly faster when there are very few goals, but \crrwspace quickly catches up as the number of goals increases, depending on $\ell$.
Intuitively, this is because BrFS must examine all states in $\shallowstates$ --- shown as the dashed line --- regardless of the goal density.
\crrwspace has no such requirement, as its runtime converges to $\goaldepth$ (\textit{i.e.} the task is solved with the first random walk) as $\numgoals$ increases to $|\goaldepthstates|$.
In this section, we formalize a more general version of this relationship.

We begin by finding a lower bound on the $\successprob$ such that \crrwspace is faster in expectation than BrFS.
Notably, while the results below require some given number of goals at depth $\goaldepth$, they also apply if there are deeper goals.

\begin{theorem}\label{thm:general_success_prob}
Let $\task$ be a search task with $\numgoals \geq 1$ goal states uniformly distributed among the $|\goaldepthstates|$ states at goal depth $\goaldepth \geq 1$. Then $\expect[\rrwrv(\task)] \leq \expect[\bfsrv(\task)]$ if the success probability $\successprob$ of any single random walk satisfies
\begin{align*}
    \successprob \geq \frac{\walklength}{|\shallowstates| + (|\goaldepthstates| + 1)/(\numgoals + 1) - 1}
\end{align*}
\end{theorem}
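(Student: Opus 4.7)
The plan is to simply chain Theorems~\ref{thm:brfs_exp} and \ref{thm:rrw_exp} and then isolate $\successprob$ with some light algebra. No probabilistic argument is needed at this stage, since the hard probabilistic content has already been packaged into those two theorems.

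First I would invoke Theorem~\ref{thm:rrw_exp} to get $\expect[\rrwrv(\task)] \leq \walklength/\successprob + 1$, and Theorem~\ref{thm:brfs_exp} to get $\expect[\bfsrv(\task)] = |\shallowstates| + (|\goaldepthstates| + 1)/(\numgoals + 1)$. A sufficient condition for $\expect[\rrwrv(\task)] \leq \expect[\bfsrv(\task)]$ is therefore
\begin{align*}
    \walklength/\successprob + 1 \;\leq\; |\shallowstates| + (|\goaldepthstates| + 1)/(\numgoals + 1).
\end{align*}
Subtracting $1$ from both sides, then dividing by $\successprob$ and by the resulting right-hand side, yields exactly the stated bound on $\successprob$.

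The one subtlety is that the rearrangement requires the quantity $D := |\shallowstates| + (|\goaldepthstates| + 1)/(\numgoals + 1) - 1$ to be strictly positive so that dividing by it preserves the direction of the inequality (and so that the bound on $\successprob$ is a meaningful probability, i.e.\ not a division by zero or a negative number). I would therefore include a short justification that $D > 0$: since $\goaldepth \geq 1$, the initial state lies in $\shallowstates$ so $|\shallowstates| \geq 1$; and since $|\goaldepthstates| \geq \numgoals \geq 1$, the fraction $(|\goaldepthstates|+1)/(\numgoals+1)$ is strictly positive. Hence $D \geq |\shallowstates| > 0$.

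The main (and only) ``obstacle'' is bookkeeping: making clear that we are chaining an \emph{upper bound} on $\expect[\rrwrv(\task)]$ with the \emph{exact} value of $\expect[\bfsrv(\task)]$, so the stated condition on $\successprob$ is sufficient but not necessary. I would flag this explicitly at the end of the proof, since the same observation motivates the tighter, tree-specific analysis that follows this theorem in the paper.
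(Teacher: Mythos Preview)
Your proposal is correct and matches the paper's own proof, which likewise just chains Theorems~\ref{thm:brfs_exp} and~\ref{thm:rrw_exp} and rearranges the inequality; the paper argues in the forward direction (starting from the bound on $\successprob$ and deriving $\expect[\rrwrv(\task)] \leq \expect[\bfsrv(\task)]$), while you run the same algebra backward, which is equivalent. Your added positivity check on the denominator and the remark about sufficiency versus necessity are reasonable clarifications that the paper omits.
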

\begin{proof}
The proof begins by taking the inverse of the inequality on $s$ above, and then performing the following derivation
\begin{align}
    1 / \successprob &\leq (|\shallowstates| + (|\goaldepthstates| + 1)/(\numgoals + 1) - 1)/\walklength\\
    \walklength / \successprob + 1 &\leq |\shallowstates| + (|\goaldepthstates| + 1)/(\numgoals + 1) \label{line:shift_around}\\
    \expect[\rrwrv(\task)] &\leq \expect[\bfsrv(\task_s)] \label{line:replace_exps}
\end{align}
Line \ref{line:shift_around} simply multiples both sides by $\walklength$ and moves the $1$ from one side to the other.
Line \ref{line:replace_exps} then follows immediately from Theorem \ref{thm:brfs_exp} and Theorem \ref{thm:rrw_exp}.
\end{proof}

Now suppose the state-space has the structure of a directed tree and the random walks are unbiased.
Notably, we doe not assume a constant branching factor, and some states in the tree may not even have any successors.
To handle this, let $\goaldepthprob \geq 0$ be the probability of any single random walk reaching the goal depth.
Here, $\successprob \leq \goaldepthprob$ and $\successprob > 0$ since the walks are unbiased.
In this scenario, we now provide a bound on the number of goals at the goal depth which guarantees that \crrwspace is at least as fast in expectation as BrFS.
\begin{theorem}\label{thm:simple_crossover_bound}
Let $\task$ be a search task on a tree with goal depth $\goaldepth \geq 2$ and $1 \leq \numgoals < |\goaldepthstates|$ goals at depth $\goaldepth$.
Let $\goaldepthprob$ be the probability of an unbiased random walk of length $\walklength$ reaching the goal depth, where $0 < \goaldepthprob \leq 1$.
Then $\expect[\bfsrv(\task_\bfactor)] \geq \expect[\rrwrv(\task_\bfactor)]$ if the number of goals $\numgoals$ at the goal depth satisfies 
$\numgoals \geq \walklength |\goaldepthstates|/\left[ \goaldepthprob|\shallowstates| \right]$
\end{theorem}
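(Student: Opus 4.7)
The plan is to reduce the claim to Theorem~\ref{thm:general_success_prob} by first computing the single-walk success probability $\successprob$ in the tree setting, and then checking that the hypothesized lower bound on $\numgoals$ makes $\successprob$ exceed the threshold required by that theorem. For the success probability, I would exploit the fact that on a directed tree every state at depth $\goaldepth$ has a unique ancestor path from $\initstate$, so the events ``the walk reaches $s$'' for distinct $s \in \goaldepthstates$ are disjoint. Letting $q_s$ be the probability that an unbiased length-$\walklength$ walk reaches $s$, we then have $\sum_{s \in \goaldepthstates} q_s = \goaldepthprob$ by assumption. Taking expectation over the uniform placement of the $\numgoals$ goals among the $|\goaldepthstates|$ states at the goal depth, each such state is a goal with probability $\numgoals/|\goaldepthstates|$, so linearity of expectation yields $\successprob = \sum_{s \in \goaldepthstates} q_s \cdot \numgoals/|\goaldepthstates| = \goaldepthprob \numgoals / |\goaldepthstates|$.

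The hypothesis $\numgoals \geq \walklength |\goaldepthstates|/[\goaldepthprob |\shallowstates|]$ rearranges immediately to $\successprob = \goaldepthprob \numgoals/|\goaldepthstates| \geq \walklength / |\shallowstates|$. To invoke Theorem~\ref{thm:general_success_prob} it then suffices to verify that $\walklength/|\shallowstates|$ dominates that theorem's threshold, i.e.\ $\walklength/|\shallowstates| \geq \walklength/[|\shallowstates| + (|\goaldepthstates|+1)/(\numgoals+1) - 1]$. Inverting, this reduces to $(|\goaldepthstates|+1)/(\numgoals+1) \geq 1$, equivalently $\numgoals \leq |\goaldepthstates|$, which holds by the assumption $\numgoals < |\goaldepthstates|$. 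Chaining the two inequalities pushes $\successprob$ above Theorem~\ref{thm:general_success_prob}'s threshold, and that theorem then delivers $\expect[\rrwrv(\task)] \leq \expect[\bfsrv(\task)]$, as required.

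The delicate step is the first one: the clean identity $\successprob = \goaldepthprob \numgoals/|\goaldepthstates|$ relies on both ingredients of the setting. The tree structure is needed so that ``reach a goal at depth $\goaldepth$'' decomposes additively over the disjoint events of reaching each particular $s \in \goaldepthstates$; in a general state-space graph a walk could traverse multiple paths to the same state and this decomposition would fail. The uniformity of goal placement is needed so that each $q_s$ contributes with weight $\numgoals/|\goaldepthstates|$ in expectation; without it a clustered placement could concentrate $\successprob$ on either the high- or low-$q_s$ regions of the tree, and one could no longer control $\expect[\rrwrv(\task)]$ via Theorem~\ref{thm:rrw_exp} applied to a single average $\successprob$. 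Once this identity is established, the remaining algebra is routine.
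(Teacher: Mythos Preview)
Your proposal is correct and follows essentially the same route as the paper: establish $\successprob \geq \goaldepthprob\numgoals/|\goaldepthstates|$ (the paper defers this to Lemma~\ref{lemma:succ_prob} in the appendix, whereas you give the disjoint-paths argument inline), rearrange the hypothesis on $\numgoals$ to get $\successprob \geq \walklength/|\shallowstates|$, and then observe that $\numgoals < |\goaldepthstates|$ forces $(|\goaldepthstates|+1)/(\numgoals+1) - 1 \geq 0$ so that Theorem~\ref{thm:general_success_prob} applies. The only cosmetic difference is that the paper states the success-probability bound as an inequality (to allow for possible goals deeper than $\goaldepth$) while you derive it as an equality for goals at depth exactly $\goaldepth$; since only the lower bound is needed, this does not affect the argument.
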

\begin{proof}
In this case, $\successprob \geq \goaldepthprob \numgoals/ |\goaldepthstates|$, because the probability of reaching the goal depth is $\goaldepthprob$, and the probability of visiting a goal state given that we have reached depth $\goaldepth$ is $\numgoals / |\goaldepthstates|$. This is formally proven in Lemma \ref{lemma:succ_prob} in Appendix \ref{sec:appendix_proofs} for a formal proof of this claim.
Thus, if the number of goals satisfies $\numgoals \geq \walklength |\goaldepthstates|/\left[ \goaldepthprob|\shallowstates| \right]$, then
\begin{align}
    \successprob & \geq \goaldepthprob \numgoals / |\goaldepthstates| \geq \goaldepthprob \frac{\walklength |\goaldepthstates|}{\goaldepthprob|\shallowstates|} / |\goaldepthstates| \geq \walklength / |\shallowstates|\\
    & \geq \walklength / \left[|\shallowstates| + (|\goaldepthstates| + 1)/(\numgoals + 1) - 1 \right] \label{line:add_rest}
\end{align}
The last line holds because $(|\goaldepthstates| + 1)/(\numgoals + 1) - 1 \geq 1$ since $\numgoals < |\goaldepthstates|$.
The result thus follows by Theorem \ref{thm:general_success_prob}.
\end{proof}

We note that by using $1$ as a (possibly loose) lower bound for $(|\goaldepthstates| + 1)/(\numgoals + 1)$ on line \ref{line:add_rest}, the bound on $g$ given in the theorem above can overestimate the actual needed number of goals.
A more accurate bound is given by Theorem \ref{thm:accurate_bound} in Appendix \ref{sec:appendix_proofs}, which uses a minor correction term to better account for the work that BrFS does at the goal depth.
However, we focus on the simpler bound in Theorem \ref{thm:simple_crossover_bound} since the improvement is marginal and the simpler bound already captures the main properties affecting the relative performance of BrFS and RRWs.

Two other notable cases are formalized in Appendix \ref{sec:appendix_proofs}.
First, we show that \crrwspace is usually faster than BrFS if all states in $\goaldepthstates$ are goals (Corollary \ref{cor:all_goals_at_goal_depth}). 
This holds because the first random walk to reach the goal depth will succeed, while BrFS must still examine all states in $\shallowstates$.
Next, we show that if the goal depth is $1$, then BrFS is usually faster than \crrwspace using unbiased random walks (Corollary \ref{lemma:goal_depth_1}).
Intuitively, this is because BrFS will examine the states in $\goaldepthstates$ one-by-one in turn, while \crrwspace will sample these states with replacement through the random walks.

\subsection{Understanding the Bounds} \label{sec:bound_discussion}

Let us now consider several implications of the above analysis.
We first note that intuitively, Theorem \ref{thm:simple_crossover_bound} indicates that the number of goals needed for \crrwspace to outperform BrFS on a directed tree --- which we call the \emph{goal crossover point} --- largely depends on the ratio of the number states at the goal depth ($|\goaldepthstates|$) to the number of states shallower than that ($|\shallowstates|$).
Implicitly, it also depends on $\goaldepth$ since $\ell \geq \goaldepth$.
For example, again consider a task on a directed tree with constant branching factor $\bfactor$ and uniformly distributed goals.
Since $|\shallowstates| = (\bfactor^d - 1)/(\bfactor - 1)$, $|\goaldepthstates| = \bfactor^d$, and $\goaldepthprob = 1.0$ (because all states have at least one successor), Theorem \ref{thm:simple_crossover_bound} states that the goal crossover point is $\ell (\bfactor - 1)\bfactor^d/(\bfactor^d - 1)$.
This is seen in Figure \ref{fig:cutoff_behaviour}, which shows this goal crossover point when $\bfactor = 4$ 
as a function of goal depth and $``\ell$ error".
That is, we assume $\ell = (1.0 + e)\goaldepth$, meaning each line corresponds to overestimating $\goaldepth$ by the same constant factor. 

While the goal crossover point increases linearly with the goal depth, the density $\numgoals/|\goaldepthstates|$ of goals at the goal depth needed for \crrwspace to outperform BrFS actually \emph{decreases} with $\goaldepth$.
This is seen in Figure \ref{fig:density_behaviour}, which shows the \emph{goal density crossover} (\textit{ie.} the goal crossover divided by $|\goaldepthstates|$).

While Theorem \ref{thm:simple_crossover_bound} captures the importance of the relationship between goal density and the ratio of $|\goaldepthstates|/|\shallowstates|$, practical performance when escaping UHRs may differ for several reasons.
For one, \crrwspace can benefit from goals (\textit{ie.} escape states) at depths between $\goaldepth$ and $\ell$ as these will increase the success probability.
BrFS does not benefit from such goals in any way.
On the other hand, BrFS will better handle \emph{transpositions} because it performs duplicate detection.
When there are many transpositions, \crrwspace is effectively searching on a larger search tree than BrFS, and \crrwspace will struggle in a similar manner as IDA$^*$ \cite{korf:ida_star} does on such problems.
Along with the above observations about the goal density crossover, we would therefore expect \crrwspace to more quickly escape UHRs in large combinatorial state-spaces, and BrFS to better handle cases with very few escape states or many transpositions.

\crrwspace may also have a further advantage in terms of wall-clock time since it does not have the additional overhead of maintaining open and closed lists as needed for duplicate checking and other operations.
In PDDL planning, this overhead is likely minimal since runtime dominated by heuristic calculation.
However, these open and closed lists do mean the worst-case memory requirements of BrFS is $O(B^D)$, in contrast to \crrwspace which is only $O(\ell)$ since only a single random walk is in memory at any one time. 
This can make \crrwspace especially useful in low-memory scenarios.

\section{Enforced Hill-Climbing with RRWs} \label{sec:rrw_ehc}
Recall that EHC breaks the search into a sequence of UHRs, where BrFS is used to find an escape state for each.
Thus, we can study the relative effectiveness of BrFS and RRWs for escaping UHRs by comparing standard EHC to variants that use RRWs instead of BrFS to escape the UHRs.
To that end, we introduce EHC-\crrwspace and EHC-\lubyrrw, which only differ from EHC in that they call \crrwspace and \lubyrrw, respectively, instead of BrFS on line \ref{ehc:brfs_call} of Algorithm \ref{alg:ehc}. 
In this section, we identify several formal properties for these variants, and evaluate them on PDDL problems.

\begin{table*}[t!]
    \centering
    \small
    \begin{tabular}{l|l|r|rrr|rrr}
    \toprule
    & & & \multicolumn{3}{c|}{EHC-\crrw} & \multicolumn{3}{c}{EHC-\lubyrrw} \\
    Autoscale Suite&Domain Classification& EHC & $\ell=10$ & $\ell=25$ & $\ell=50$ & $m = 1$ & $m = 2$ & $m = 4$\\ 
    \midrule
    \multirow{3}{*}{Optimal Track}&UHR-Bounded Total (180) & 180.0 & 180.0 & 180.0 & 180.0& 180.0& 180.0& 180.0\\
    &UHR-Unbounded Total (240) & 219.6 & 196.0 & 218.4 & 223.0 & 222.6 & 223.0 & 223.8 \\
    & Incomplete Total (90) & 41.6 & 24.2 & 25.0 & 25.0 & 35.8 & 34.6 & 28.8 \\
    \midrule
    \midrule
    \multirow{3}{*}{Agile Track}&UHR-Bounded Total (153) & 103.6 & 99.6 & 99.6 & 98.0 & 97.2 & 97.4 & 96.4 \\
    &UHR-Unbounded Total (220)& 97.2 & 80.2 & 100.8 & 101.4 & 107.8 & 107.8 & 107.0 \\
    &Incomplete Total (90) & 25.0 & 17.8 & 15.6 & 14.2 & 21.6 & 19.8 & 19.8 \\
    \bottomrule
    \end{tabular}
    \caption{A summary of the coverage of EHC and the EHC-RRW variants on different types of problems.}
    \label{tab:coverage_summary}
\end{table*}

\subsection{Properties of EHC and EHC-RRW Variants} \label{ehc_props}

The EHC-RRW variants have a strong connection to Arvand \cite{nakhost:arvand, nakhost:arvand_2nd_gen} which also performs an RRW-based local search.
However, instead of committing to the first escape state found, Arvand commits to the state with the lowest heuristic value found after a fixed number of walks.
Arvand also only calculates the heuristic value of the last state along every random walk, restarts the entire search back to $\initstate$ when progress stalls, and incorporates a number of other planning enhancements.
While this makes Arvand a powerful planner, these features make it unsuitable for isolating and studying the effectiveness of using RRWs to escape random walks.
Thus, this work focuses on the simpler methods EHC-RRW variants outlined above. 

Our analysis is based on that of
\citeauthor{hoffmann:ignoring_delete_lists} \shortcite{hoffmann:ignoring_delete_lists} who outlined a domain taxonomy --- originally for the \emph{delete relaxation} $h^+$ heuristic --- that categorizes domains according to their topological characteristics.
The first axis of the taxonomy relates to dead-ends. A domain either has no dead-ends, \emph{recognized} dead-ends (\textit{i.e.} $s$ is a dead-end if and only if $h(s) =\infty$), or \emph{unrecognized} dead-ends ($\exists s \in S$ such that $s$ is a dead-end and $h(s)<\infty$). \citeauthor{hoffmann:ignoring_delete_lists} showed that EHC is complete on a state-space topology $\tuple{\task, h}$ if $\task$ has no dead-end states, or all dead-end states are recognized by $h$.
This is because a UHR in any such $\task$ must have a finite exit distance, where recognized dead-ends are treated as states with no successors. As such, there is some maximum exit distance $D(\task)$ over all UHRs in $\task$. 
If $B$ is the maximum number of successors of any state in $\task$, then $|\shallowstates| \in O(B^{D(\task)})$ and $|\goaldepthstates| \in O(B^{D(\task)+1})$, where the extra ``plus one" is because the shallowest escape is one step deeper than the shallowest exit.
Thus, the runtime for each BrFS to escape a UHR will be $O(B^{D(\task)+1})$ by Equation \ref{eqn:brfs_upper_and_lower}.
Since $h$ only returns non-negative integer values, there are at most $h(\initstate)$ UHRs, and so the runtime of EHC is $O(h(\initstate)B^{D(\task) + 1})$.
Thus, EHC is complete on such state-space topologies, which we call \emph{EHC-complete}. Domains with \emph{unrecognized} dead ends will thus be \emph{EHC-incomplete}.

Recall that an algorithm $A$ is complete on a problem if there exists some constant $k\geq 0$, such that $A$ always terminates on that problem in at most $k$ steps.
As the EHC-RRW variants are stochastic, no such constant exists for these methods.
That said, the EHC-RRW variants using unbiased random walks can be shown to have a finite expected runtime on EHC-complete problems.
To see why, consider using EHC-\crrwspace where $\ell \geq D(\task)$.
In this case, $\successprob \geq 1/B^{D(\task)+1}$ since at least one path with depth at most $D (\task)+ 1$ will reach an escape.
Thus, the expected runtime to escape any UHR will be $O(\ell B^{D(\task)+1})$ by Theorem \ref{thm:rrw_exp}, and the expected runtime to solve the problem is $O(h(\initstate)\ell B^{D(\task)+1})$ by the same argument as EHC.

Since the expected runtime of using \lubyrrwspace to escape an UHR will be no worse than a log-factor more the optimal restart policy on that UHR \cite{luby:restarts}, the expected runtime of EHC-\lubyrrwspace runtime is at most a log-factor worse than EHC-\crrwspace with $\ell=D$, by a similar argument as above. As such,
EHC-\crrwspace has a finite expected runtime if $\ell \geq D$ and EHC-\lubyrrwspace will always have a finite expected runtime on EHC-complete problems.

The second axis of \citeauthor{hoffmann:ignoring_delete_lists}'s taxonomy further divides the EHC-complete domains by UHR size. In \emph{bounded-UHR} domains, there exists an integer $D>0$ such that the exit distance of every UHR in every problem in the domain is at most $D$. In contrast, the exit distance of UHRs in \emph{unbounded-UHR} domains can grow arbitrarily with problem size even for problems within the same domain.

To see the value of bounded-UHRs, consider solving STRIPS planning problems --- where the set of operators $\mathcal{O}$ is given as input --- when using the $h^+$ heuristic.
Here, $|\mathcal{O}| = B$, and since no operator in $\mathcal{O}$ can be included more than once in the optimal delete relaxed plan to a problem, $h^+(\initstate) \leq B$.
Thus, because $D$ is independent of the problem input, EHC has a polynomial runtime of $O(B \cdot B^{D+1})$ on problems from bounded-UHR domains.

By a similar analysis as above, EHC-\crrwspace will clearly have a polynomial expected runtime of $O(\ell B^{D + 1})$ on such STRIPS planning problems if $\ell \geq D$, and EHC-\lubyrrwspace will always have a polynomial expected runtime in this case.

\subsection{Empirical Evaluation}

In this section, we evaluate EHC and EHC-RRW on PDDL planning problems to better understand the relative effectiveness of BrFS and RRWs for escaping UHRs.

\subsubsection{Experimental Setup.}
We tested all methods using Fast Downward \cite{helmert-jair2006}.
EHC was re-implemented since the existing version deviated from the original description in several important ways. 
Early experiments suggested our version outperforms the existing one.
The details of our implementation can be found in Appendix \ref{sec:appendix_ehc_implementation}.

The problems used for testing came from the optimal and agile Autoscale benchmark suites \cite{torralba:autoscale}. In particular, we experiment with the 17 domains that have been previously categorized for $h^+$ according to the taxonomy described in Section \ref{ehc_props} \cite{hoffmann:ignoring_delete_lists}.
The categorization of these 17 domains according to the $h^+$ taxonomy can be found in Table \ref{tab:taxonomy} in Appendix \ref{sec:appendix_results}.

All algorithms were tested using the unit-cost FF heuristic \cite{hoffmannff_2001}. While FF only approximates $h^+$, it has previously been shown empirically that the same taxonomy holds for FF on the 17 domains in question \cite{hoffmann:empirical_analysis}.
In addition, if $h^+$ recognizes dead-ends in a domain, then provably so too will the FF heuristic, meaning completeness is not impacted by using FF \cite{hoffmann:ignoring_delete_lists}.

Finally, all experiments were run on a VMware Virtual Platform using an 8-core Intel Xeon Gold 6226R CPU @ 2.90GHz with a 16 KB L1 cache, with a 10 minute time limit and a 3.5
GB memory limit per problem.
Results were averaged over 5 runs per problem, including for EHC which was implemented to use random tie-breaking.
EHC-\crrwspace and EHC-\lubyrrwspace were each tested with three different values for the walk length $\ell$ and the base multiplier $m$, respectively.

\subsubsection{Coverage Results.} Table \ref{tab:coverage_summary} summarizes the coverage results of the different methods on the different test suites used. 
Full per-domain results can be seen in Table \ref{tab:classified_coverage} in Appendix \ref{sec:appendix_results}.
The number of problems per category is shown in parentheses.
The appendix also contains plots that show how coverage changes with number of evaluations and time.
While Autoscale contains 30 problems, Fast Downward was unable to translate some problems in the agile track from PDDL to SAS+ in the given 3.5 GB memory limit.
We omit these from the test set.
Below, we consider each taxonomy category individually to better understand how different topological features impact performance.

\subsubsection{Bounded-UHR Domains.} All optimal track problems in bounded-UHR domains were solved by all tested methods. 
This is consistent with the strong expected runtime guarantees we have for EHC and the EHC-RRW variants.
However, the large size of the agile track problems meant they still remain challenging for EHC-based approaches.
For example, in the \texttt{logistics} domain --- which has a maximum exit distance of $1$ --- has such a high branching factor that only 10s of states were being expanded per second.

\begin{figure}[t]
 	\centering
 	\includegraphics[width=0.725\linewidth]{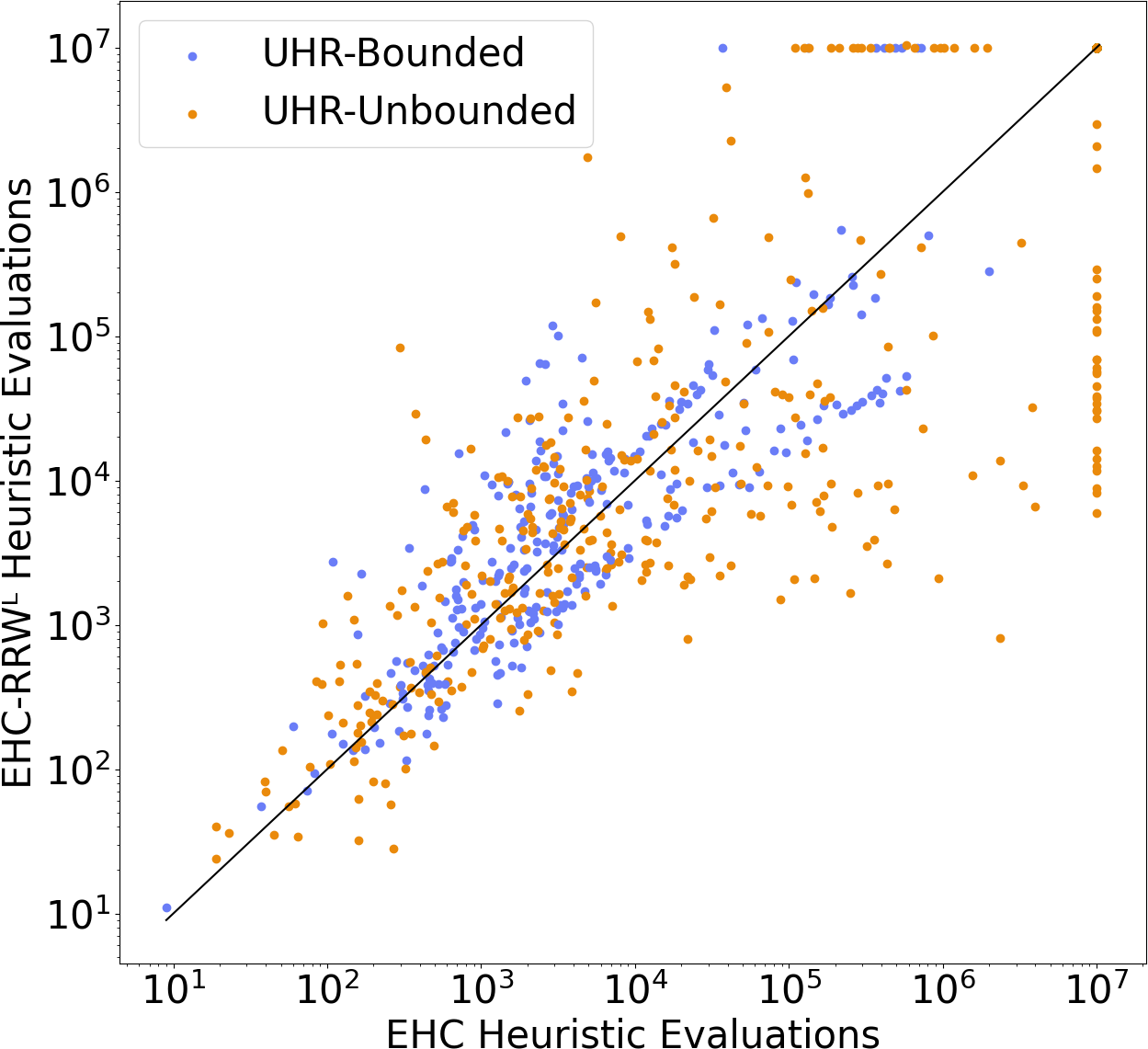} 
    \caption{Per-problem runtime comparison between EHC and EHC-\lubyrrwspace with $m=1$.} \label{fig:ehc_vs_luby}
 \end{figure}

A comparison of per-problem heuristic evaluations (\textit{ie.} runtime) of one run of each of EHC and EHC-\lubyrrwspace with $m=1$ is shown in Figure \ref{fig:ehc_vs_luby}.
The bounded-UHR problems are shown as blue dots.
For these runs, EHC was faster or the only one able to solve $56.3\%$ of the problems.
This advantage is consistent with our theory which suggests a high goal density is needed for \crrwspace to be faster if the escapes are shallow (\textit{ie.} see Figure \ref{fig:density_behaviour}).
The figure also shows there is significant correlation between the performance of these methods, which we verified by calculating the Pearson correlation of the logarithm of these evaluation counts.
The correlation on problems solved by both methods was 0.83.

\subsubsection{Unbounded-UHR Domains.} EHC-\lubyrrwspace has the best coverage in these domains, but this is largely due to the two \texttt{pipesworld} domains.
On the remaining domains, its performance is similar to EHC.
EHC-\crrwspace can show strong coverage, but is very sensitive to the selection of $\ell$.
In terms of runtime, EHC-\lubyrrwspace and EHC are almost exactly equal in how many times each was fastest on all unbounded-UHR problems, but EHC-\lubyrrwspace is better performing on $70.3\%$ of the agile unbounded-UHR problems.
Given our theoretical results, this suggests the goal density is not dropping dramatically as the exit distance increases.

The Pearson correlation between EHC-\lubyrrwspace and EHC on unbounded-UHR problems was $0.68$.
While this is lower than on bounded-UHR problems, it is still reasonably high.

\subsubsection{EHC-Incomplete Domains.} Table \ref{tab:coverage_summary} shows the EHC-based approaches struggle the most on domains with unrecognized dead ends.
EHC has the best performance, albeit on only $3$ domains.
In such domains, it is not only important to escape UHRs as quickly as possible, but to find a ``good" escape that does not lead to a dead end region of the state-space.
We hypothesize that when using $h^+$-based heuristics on such problems, the use of BrFS to find shallowest escapes may have an advantage.
This is because delete relaxation based methods do not recognize when resources (like fuel) are exhausted by an action, but shallower escapes may mean less resources are being used up, and thus EHC is less likely to find an escape leading to a dead end.
However, further investigation is needed on this topic.

\begin{figure}[t]
 	\centering
 	\includegraphics[width=0.8\linewidth]{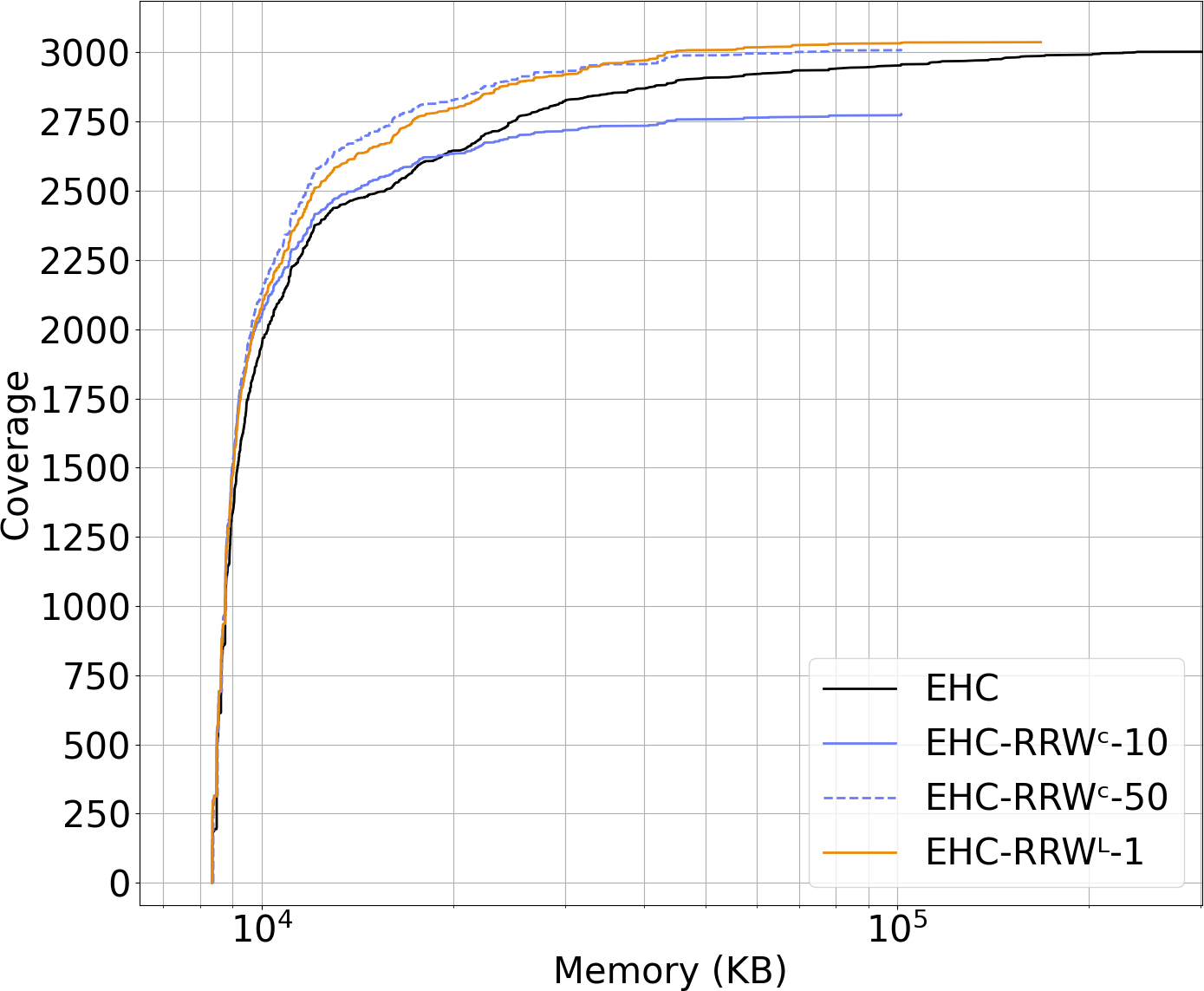} 
    \caption{Memory usage comparison between EHC and the EHC-RRW variants.} \label{fig:memory_usage}
 \end{figure}
 
\subsubsection{Memory Usage.} Figure \ref{fig:memory_usage} shows how coverage relates to memory for the methods discussed on the EHC-complete domains.
For readability, we only include $m=1$ for EHC-\lubyrrwspace since all values had very similar behaviour, and the best and worst performing values of $\ell$ for EHC-\crrw.
The EHC-RRW variants show clear advantages in this evaluation, which aligns with the fact that RRWs do not need to maintain open and closed lists like BrFS.
Notably, none of the methods ran out of memory on any test problem.

\section{Conclusion and Related Work}
In this work, we focus on improving our theoretical and empirical understanding of two different methods for escaping UHRs in BrFS and RRWs.
To do so, we characterized the expected runtime of these approaches, and then showed how the structure of a UHR and the probability that a single random walk will find an escape determine if RRWs will outperform BrFS in expectation.
Next, we considered RRW-based variants of EHC, since this algorithm consists of a series of BrFS, each aiming to escape a UHR.
Existing worst-case runtime results were extended to these variants in the form of expected runtime guarantees.
We also empirically compared EHC and EHC-RRW to better understand their relative behaviour in practice.

Regarding related work, Arvand-LS is similar to Arvand (described in Section \ref{ehc_props}), but it uses local GBFSs augmented with random walks to escape each UHR \cite{xie:local_rws}.
Local best-first searches have also been useful for escaping UHRs in best-first search-based planners \cite{xie:improved_local}.
Our results supplement this research by furthering our understanding of the differences between these competing ways to escape UHRs.

\citeauthor{nakhost:rw_analysis} (\citeyear{nakhost:rw_analysis}) formally analyzed the expected runtime of a single random walk and RRWs on classes of undirected graphs characterized by the probability of getting closer or farther from a goal on every step.
Their model for RRWs assumed a constant restart probability at every step instead of a constant restart depth.
\citeauthor{everitt:bfs_vs_dfs} (\citeyear{everitt:bfs_vs_dfs}) also performed an analysis comparing BrFS and depth-first search on bounded depth-trees.
Their analysis does not include RRWs, and makes different assumptions on the distribution of the goals in the tree.
Understanding the applicability of their goal distribution models to escaping UHRs and RRWs remains as future work.

\bibliography{aaai26}

\newpage
\appendix

\section{Basic Algorithms} \label{sec:appendix_algs}

Pseudocode for BrFS is given below, which defines this algorithm using $\OPEN$ and $\CLOSED$ lists as in \emph{best-first search}.
On every iteration, BrFS will select the state $s$ in $\OPEN$ with the lowest depth (line \ref{alg:brfs:line:min}), and generate its successors (line \ref{alg:brfs:line:generation}).
For any successor that is seen for the first time, the algorithm will perform a goal test. 
If it succeeds, the algorithm terminates and returns the path from $\initstate$ which is implicitly stored using $\parent$ pointers. 
If $s$ is not a goal, then it is added to $\OPEN$, and the search continues.
Notably, we have defined BrFS to perform a goal test on state generation, not after selecting it from $\OPEN$ to minimize the number of generations.
Unlike other best-first search algorithms, BrFS is still guaranteed to find the shortest path to a goal even with this modification.

\begin{algorithm}[h]
\small
\begin{algorithmic}[1]
    \STATE \textbf{Input:} task $\tuple{\states, \initstate, \transitions, \goaltest}$
    \IF{$\goaltest(\initstate) = \true$} \label{brfs:goal_test_start}
        \RETURN $\tuple{\initstate}$ ~~~~~~\% Single state path is solution
    \ENDIF
    \STATE $\parent(\initstate) = \NONE$, $depth(\initstate) = 0$
    \STATE $\OPEN \gets \lbrace \initstate \rbrace$, $\CLOSED \gets \lbrace \rbrace$
    \WHILE{$\OPEN$ is not empty}
        \STATE $s \gets \arg\min_{s' \in \OPEN} depth(s')$ \label{alg:brfs:line:min}
        \STATE $\CLOSED \gets \CLOSED \cup \lbrace s \rbrace$
        \FORALL{$s' \in \transitions(s)$} \label{alg:brfs:line:generation}
            \IF{$s' \notin \OPEN \cup \CLOSED$}
                \STATE $\parent(s') = s$, $depth(s') = depth(s) + 1$
                \STATE $\OPEN \gets \OPEN \cup \lbrace s' \rbrace$
                \IF{$\goaltest(s') = \true$} \label{brfs:goal_test_reg}
                    \RETURN $\tuple{\initstate, \ldots, s'}$ a from $\parent$ pointers
                \ENDIF
            \ENDIF
        \ENDFOR
    \ENDWHILE
    \RETURN $\tuple{}$~~~~~~\% No solution exists
\end{algorithmic}
\caption{Breadth-First Search (\texttt{brFS})}
\label{alg:brfs}
\end{algorithm}

\section{Additional Theoretical Proofs} \label{sec:appendix_proofs}

In this section, we provide additional results to those provided in the main text.
We begin this section with comparative runtime analysis between BrFS and \crrwspace in several notable special cases.
First, we consider the case where all the states at the goal depth of a tree are goals.
This is generally advantageous for \crrw, since the first random walk that reaches that depth will find a goal, while BrFS must examine all states shallower than the goal depth.
\begin{corollary} \label{cor:all_goals_at_goal_depth}
Let $\task$ be a search task on a tree where all states in $\goaldepthstates$ are goals.
If the probability of any random walk reaching a goal is $\goaldepthprob \geq \goaldepth / |\shallowstates|$, then $\expect[\bfsrv(\task_\bfactor)] \geq \expect[\rrwrv(\task_\bfactor)]$.
\end{corollary}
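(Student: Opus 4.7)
The plan is to reduce this corollary to a direct application of Theorems \ref{thm:brfs_exp} and \ref{thm:rrw_exp}, exploiting the fact that the hypothesis ``all states in $\goaldepthstates$ are goals'' collapses the BrFS expected runtime to essentially $|\shallowstates|$ and makes $\successprob$ coincide with $\goaldepthprob$. I will take $\walklength = \goaldepth$, which is the natural (and minimal) walk length in this setting since no random walk needs to proceed beyond the goal depth to encounter a goal.

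First I would set $\numgoals = |\goaldepthstates|$ by hypothesis. Substituting into Theorem \ref{thm:brfs_exp} gives
\begin{align*}
\expect[\bfsrv(\task)] = |\shallowstates| + \frac{|\goaldepthstates| + 1}{|\goaldepthstates| + 1} = |\shallowstates| + 1,
\end{align*}
matching the intuition that BrFS must exhaust $\shallowstates$ and then the very first state it tests at the goal depth is a goal. Next I would observe that since every state at depth $\goaldepth$ is a goal on a tree, every random walk that reaches the goal depth terminates at a goal on that step. Hence $\successprob = \goaldepthprob$.

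With $\walklength = \goaldepth$, Theorem \ref{thm:rrw_exp} gives
\begin{align*}
\expect[\rrwrv(\task)] \leq \frac{\walklength}{\successprob} + 1 = \frac{\goaldepth}{\goaldepthprob} + 1.
\end{align*}
The hypothesis $\goaldepthprob \geq \goaldepth / |\shallowstates|$ rearranges to $\goaldepth/\goaldepthprob \leq |\shallowstates|$, and chaining the inequalities yields $\expect[\rrwrv(\task)] \leq |\shallowstates| + 1 = \expect[\bfsrv(\task)]$, which is the desired conclusion.

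The only subtle point, and the one I would flag carefully, is the implicit choice of $\walklength$. The corollary statement does not pin down $\walklength$, so I would either state at the outset that we set $\walklength = \goaldepth$ (justified because longer walks add nothing when all goal-depth states are goals) or, alternatively, read the hypothesis on $\goaldepthprob$ as ``there exists a walk length $\walklength \geq \goaldepth$ such that $\goaldepthprob \geq \walklength/|\shallowstates|$'' and apply Theorem \ref{thm:rrw_exp} with that $\walklength$. Either reading makes the derivation go through; beyond this bookkeeping the proof is a short substitution, so I do not expect a genuine obstacle.
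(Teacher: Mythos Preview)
Your proposal is correct and follows essentially the same route as the paper. The only cosmetic difference is that the paper invokes Theorem~\ref{thm:general_success_prob} (which packages Theorems~\ref{thm:brfs_exp} and~\ref{thm:rrw_exp} together) rather than applying those two theorems directly as you do; your handling of the $\walklength$ issue also matches the paper's, which argues that any walk of length $\walklength \geq \goaldepth$ terminates at depth $\goaldepth$ anyway, so the effective walk length is $\goaldepth$.
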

\begin{proof}
Since all states at the goal depth are goals, then a goal will be found on the first random walk to reach the goal depth.
Thus $\goaldepthprob = \successprob$.

Notice that since $\goaldepthprob > 0$ means that $\walklength \geq \goaldepth$.
However, no single random walk can go deeper than $\goaldepth$, so the \crrwspace is essentially running random walks with a maximum depth of $\goaldepth$.
The result then follows immediately from Theorem \ref{thm:general_success_prob}.
\end{proof}

In the special case that every state in $\task$ has at least one successor, this corollary implies that $\expect[\bfsrv(\task_\bfactor)] \geq \expect[\rrwrv(\task_\bfactor)]$ regardless of the size of $\shallowstates$.
This is because $|\shallowstates| \geq d^*$, since there must be at least one state at each of depths $0, 1, \ldots, \goaldepth - 1$.

Next, we consider the case where the goal depth is $1$. This is a notable for EHC-RRW variants, since some problems have been shown to have bounded exit distances.
\begin{corollary}
Let $\task$ be a search task with $\goaldepth = 1$ and $1 \leq \numgoals < |\goaldepthstates|$ at this depth. Then $\expect[\bfsrv(\task_\bfactor)] \geq \expect[\rrwrv(\task_\bfactor)]$ if 
\begin{align}
    s \geq \ell (\numgoals + 1)/(|\goaldepthstates| + 1)
\end{align}
\end{corollary}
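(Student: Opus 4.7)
The plan is to obtain this corollary as a direct specialization of Theorem \ref{thm:general_success_prob} by exploiting the fact that a goal depth of $1$ forces $|\shallowstates|$ to take its smallest possible value. So almost all of the work has already been done in Theorem \ref{thm:general_success_prob}; what remains is a substitution and a one-line simplification.

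First I would observe that whenever $\goaldepth = 1$, the only state of depth strictly less than $1$ is the initial state, so $|\shallowstates| = 1$. Next, I would invoke Theorem \ref{thm:general_success_prob}, which guarantees $\expect[\rrwrv(\task)] \leq \expect[\bfsrv(\task)]$ whenever
\begin{align*}
\successprob \geq \frac{\walklength}{|\shallowstates| + (|\goaldepthstates|+1)/(\numgoals+1) - 1}.
\end{align*}
Substituting $|\shallowstates| = 1$ collapses the $+1$ and the $-1$ in the denominator, leaving
\begin{align*}
\successprob \geq \frac{\walklength}{(|\goaldepthstates|+1)/(\numgoals+1)} = \frac{\walklength (\numgoals+1)}{|\goaldepthstates|+1},
\end{align*}
which is exactly the claimed sufficient condition on $\successprob$ (the variable written as $s$ in the statement).

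There is essentially no hard step here; the only thing to be careful about is confirming that the hypotheses of Theorem \ref{thm:general_success_prob} are satisfied, namely that $\numgoals \geq 1$ and $\goaldepth \geq 1$, both of which are given. The restriction $\numgoals < |\goaldepthstates|$ in the corollary is not needed for the algebraic derivation itself, but it makes the bound meaningful (otherwise $\successprob = 1$ trivially and the problem is uninteresting). I would also briefly note, to connect this back to the surrounding discussion, why the condition is generally hard to satisfy with unbiased walks: when $\goaldepth = 1$ the ratio $(\numgoals+1)/(|\goaldepthstates|+1)$ essentially asks for a goal density close to $\walklength^{-1}$, which matches the informal claim in the text that BrFS is typically faster than \crrwspace in this regime because BrFS enumerates the children of $\initstate$ without replacement while random walks sample them with replacement.
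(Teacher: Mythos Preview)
Your proposal is correct and matches the paper's own proof essentially verbatim: the paper simply notes that $\goaldepth = 1$ forces $|\shallowstates| = 1$ and then appeals directly to Theorem~\ref{thm:general_success_prob}. Your additional remarks on the hypotheses and the with-/without-replacement intuition are fine elaborations but not required for the argument.
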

This follows directly from Theorem \ref{thm:general_success_prob}, because  $\initstate$ is the only state shallower than the goal depth, meaning $|\shallowstates| = 1$.

Notably, if \crrwspace is using unbiased random walks, not all states at depth $1$ are goal states, and there are no goal states in $\task$ reachable at any depth deeper than depth $1$, then we can guarantee BrFS will always be faster in expectation than \crrw.
Intuitively, this is because both algorithms visit the initial state once, and then BrFS samples states from the goal depth in search of a goal ``without replacement" (\textit{i.e.} it never performs a goal test on any state more than once).
In contrast, \crrwspace searches the goal depth ``with replacement", since different random walks may visit the same state.
This is formalized in the following lemma.

\begin{lemma} \label{lemma:goal_depth_1}
If $\task$ is a search task with $\goaldepth = 1$, $1 \leq \numgoals < |\goaldepthstates|$ goals at this depth, and no goal states reachable on paths containing two or more transitions, then $\expect[\bfsrv(\task_\bfactor)] < \expect[\rrwrv(\task_\bfactor)]$ if the random walks are unbiased.
\end{lemma}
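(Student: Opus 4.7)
The strategy is to compute both expected runtimes exactly in this special case and reduce the desired strict inequality to a direct consequence of the hypothesis $\numgoals < |\goaldepthstates|$.

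First, I would apply Theorem \ref{thm:brfs_exp} using $|\shallowstates| = 1$ (the only state of depth strictly less than $\goaldepth = 1$ is $\initstate$ itself), obtaining $\expect[\bfsrv(\task)] = 1 + (|\goaldepthstates| + 1)/(\numgoals + 1)$. Next, I would pin down the success probability of a single unbiased random walk. Since every state of depth $1$ lies in $\transitions(\initstate)$ and vice versa, the first sample is uniform over the $|\goaldepthstates|$ depth-$1$ states, so $\successprob = \numgoals/|\goaldepthstates|$. The key structural observation is that, because no goal is reachable along any path of length at least two, every walk that does not succeed on its first step fails outright; equivalently, a successful walk has length exactly $1$, while an unsuccessful walk runs for the full $\walklength$ steps without triggering the goal test on line \ref{rrw:goal_test} of Algorithm \ref{alg:rrw}.

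The third step is to compute $\expect[\rrwrv(\task)]$ exactly rather than settling for the upper bound of Theorem \ref{thm:rrw_exp}. Letting $Y$ denote the geometrically distributed number of walks performed, there are $Y - 1$ failed walks of length $\walklength$ and one successful walk of length $1$, plus the initial goal test on $\initstate$. This gives $\expect[\rrwrv(\task)] = (1/\successprob - 1)\walklength + 2 = (|\goaldepthstates|/\numgoals - 1)\walklength + 2$. Since we may assume $\walklength \geq 1$ (otherwise $\successprob = 0$ and $\expect[\rrwrv(\task)] = \infty$, making the claim trivial), this is bounded below by $|\goaldepthstates|/\numgoals + 1$. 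The desired strict inequality then reduces to $(|\goaldepthstates| + 1)/(\numgoals + 1) < |\goaldepthstates|/\numgoals$, which cross-multiplies to $\numgoals < |\goaldepthstates|$, exactly the hypothesis.

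The proof is essentially bookkeeping, and I do not anticipate a serious obstacle. The main subtlety worth flagging is justifying that a failed walk truly consumes the full $\walklength$ steps, which formalizes the intuition preceding the lemma that BrFS samples the goal-depth states ``without replacement'' while \crrwspace samples ``with replacement''; this step relies on the hypothesis that no goal exists past depth $1$ so Algorithm \ref{alg:rrw}'s early termination never fires after the first step. A secondary care point is to note explicitly that $\transitions(\initstate) = \goaldepthstates$, which is immediate from $\goaldepth = 1$ and guarantees that unbiased sampling of a successor of $\initstate$ is uniform over $\goaldepthstates$.
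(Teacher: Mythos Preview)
Your approach is the same as the paper's---compute both expectations and reduce to $(|\goaldepthstates|+1)/(\numgoals+1) < |\goaldepthstates|/\numgoals$---but there is one real gap. You assert that an unsuccessful walk runs for the full $\walklength$ steps, justifying this by saying the goal test on line~\ref{rrw:goal_test} never fires after the first step. But Algorithm~\ref{alg:rrw} also terminates early when $|\transitions(s)| = 0$, and nothing in the hypotheses rules out depth-$1$ (or deeper) states having no successors. Hence your claimed equality $\expect[\rrwrv(\task)] = (1/\successprob - 1)\walklength + 2$ is only an \emph{upper} bound in general, and you cannot then bound it from below by $|\goaldepthstates|/\numgoals + 1$ via monotonicity in $\walklength$.

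The paper sidesteps this by first reducing to the case $\walklength = 1$: since goals occur only at depth $1$ and $\initstate$ has at least one successor, every walk takes at least one step, so increasing $\walklength$ can only increase $\expect[\rrwrv(\task)]$. With $\walklength = 1$ every walk has length exactly $1$, giving $\expect[\rrwrv(\task)] = 1 + |\goaldepthstates|/\numgoals$ on the nose, and the comparison follows. Equivalently, you can repair your argument in place by observing that a failed walk has length \emph{at least} $1$, which yields $\expect[\rrwrv(\task)] \geq (1/\successprob - 1)\cdot 1 + 1 + 1 = |\goaldepthstates|/\numgoals + 1$ directly---this is all you actually need, and it avoids any claim about walks reaching depth $\walklength$.
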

\begin{proof}
We will show the statement holds for $\walklength = 1$.
Since the expected runtime of \crrwspace can only be higher for a given walk length because all goals are at depth 1, this will show the result holds in general.

By Theorem \ref{thm:brfs_exp}, the expected runtime of BrFS will be
\begin{align}
    |\shallowstates| + \frac{|\goaldepthstates| + 1}{\numgoals + 1} = 1 + \frac{|\goaldepthstates| + 1}{\numgoals + 1}
\end{align}
This holds since $\shallowstates$ consists solely of the initial state.
The expected runtime of \crrwspace with unbiased random walks to depth $1$ will exactly $1 +|\goaldepthstates| / \numgoals$ by a similar argument to Theorem \ref{thm:rrw_exp}, since $\ell = 1 = \goaldepth$ and every random walk will have a depth of exactly $1$ (since the initial state has at least one successor).
Now, $|\goaldepthstates| > \numgoals$, we are guaranteed that $|\goaldepthstates|/ \numgoals$ is greater than $(|\goaldepthstates| + 1)/(\numgoals + 1)$.
This is because for any $x > y$ for positive $x$ and $y$, $x/y \geq (x + 1) / (y + 1)$, since the right hand side of this inequality moves closer to $1$ by adding $1$ to the top and bottom of the left hand side.
Thus the statement holds for $\walklength = 1$, which as we argued above says the statement holds for any $\walklength$.
\end{proof}

et us now turn to our main result which compares the expected runtime of BrFS and \crrwspace using unbiased random walks on trees.
This is a more accurate version of the result that appears in the main text.
To prove this result, we need several lemmas.
The first provides a lower bound on the success probability of an unbiased random walk on a tree in terms of the goal depth size, number of goals at that depth, and probability a random walk reaches that depth.

\begin{lemma}\label{lemma:succ_prob}
Let $\task$ be a search task on a tree with $\numgoals \geq 1$ goals uniformly distributed among the $|\goaldepthstates|$ states at the goal depth $\goaldepth$. Suppose $0 < \goaldepthprob \leq 1$ is the probability of a single unbiased random walk reaching the goal depth.
Then the success probability of a unbiased random walk $\successprob$ satisfies the following:
\begin{align*}
    \successprob \geq \frac{\goaldepthprob \numgoals}{|\goaldepthstates|}
\end{align*}
\end{lemma}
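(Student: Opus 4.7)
The plan is to establish $\successprob \geq \goaldepthprob \numgoals / |\goaldepthstates|$ by lower-bounding $\successprob$ using only the walks that find a goal at exactly depth $\goaldepth$, and then exploiting the uniform goal placement exactly as in Theorem~\ref{thm:brfs_exp}. For each state $s \in \goaldepthstates$, let $\pi(s)$ denote the probability that a single unbiased random walk of length $\walklength$ reaches $s$. Because $\task$ is a tree, each walk has a unique root path and can visit at most one state at any fixed depth, so the events ``the walk visits $s$'' for $s \in \goaldepthstates$ are pairwise disjoint. Summing over the goal depth layer therefore gives $\goaldepthprob = \sum_{s \in \goaldepthstates} \pi(s)$; if $\walklength < \goaldepth$ both sides are zero and there is nothing to prove.

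Next, I would let $G \subseteq \goaldepthstates$ be the random size-$\numgoals$ set in which the goals are placed. Dropping any contribution from goals deeper than $\goaldepth$ and switching to indicator form yields
\begin{align*}
\successprob \;\geq\; \expect_G\!\Bigl[\,\sum_{s \in \goaldepthstates} \pi(s)\, \mathbf{1}[s \in G]\,\Bigr] \;=\; \sum_{s \in \goaldepthstates} \pi(s)\, \prob(s \in G).
\end{align*}
Since $G$ is uniform over size-$\numgoals$ subsets of $\goaldepthstates$, symmetry gives $\prob(s \in G) = \numgoals/|\goaldepthstates|$ for every $s$. Pulling this constant out of the sum yields $\successprob \geq (\numgoals/|\goaldepthstates|)\sum_{s \in \goaldepthstates} \pi(s) = \goaldepthprob \numgoals / |\goaldepthstates|$, which is the claim.

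The step I expect to be the main obstacle is clarifying exactly what ``uniformly distributed'' buys us. A per-placement reading of the bound is false: on a tree with varying branching factor the hitting probabilities $\pi(s)$ can differ dramatically across $\goaldepthstates$, and an adversarial placement of all $\numgoals$ goals at low-$\pi$ states can drive the success probability strictly below $\goaldepthprob \numgoals/|\goaldepthstates|$. So the statement must be read, consistently with Theorem~\ref{thm:brfs_exp}, as a claim about $\successprob$ averaged over the random goal placement, and the one substantive step is the symmetry identity $\prob(s \in G) = \numgoals/|\goaldepthstates|$. Everything else --- disjointness of the hitting events and the resulting $\sum \pi(s) = \goaldepthprob$ --- is a bookkeeping consequence of the tree assumption that would need revisiting on a DAG, where multiple root-to-$s$ paths would conflate contributions.
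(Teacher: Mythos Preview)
Your proof is correct and follows essentially the same two-step logic as the paper: first obtain $\successprob = \goaldepthprob\,\numgoals/|\goaldepthstates|$ when all goals lie at depth $\goaldepth$ (the paper via the conditional $\prob[\text{goal}\mid\text{walk reaches depth }\goaldepth]=\numgoals/|\goaldepthstates|$, you via the equivalent decomposition $\sum_{s}\pi(s)\,\prob(s\in G)$), then note that additional goals below $\goaldepth$ can only raise $\successprob$. Your explicit discussion of why the bound must be read as an average over the random goal placement is a clarification the paper leaves implicit.
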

\begin{proof}
We first show the statement is true in the case that there are no goals deeper than the goal depth, and then extend this result to the general case below.
As such, we first assume there are no goals deeper than the goal depth.
In this case, the success probability of a random walk corresponds to the intersection of two events.
First, the random walk must reach the goal depth.
We will let $\prob[|P| \geq \goaldepth]$ to denote the probability of this happening, which is $\goaldepthprob$ by definition.
Second, the state at the goal depth that has been reached must be a goal state. We denote this as $\prob[\goaltest(last(P)) = \text{True} \mid |P| \geq \goaldepth]$.
Given that the goal depth is reached and each such path only visits one state at the goal depth, this value is $\numgoals/|\goaldepthstates|$ since the goals are uniformly distributed.
\begin{align}
    \successprob & = \prob[|P| \geq \goaldepth] \prob[\goaltest(last(P)) = \text{True} \mid |P| \geq \goaldepth]\\
    & =  \goaldepthprob \numgoals / |\goaldepthstates|
\end{align}
Thus the statement holds when there are no goal states deeper than the goal depth.

If there are goals deeper than $\goaldepth$, the $\prob[\goaltest(P)]$ can only increase and thus the statement holds more generally.
\end{proof}

Next, we show that, all else being equal, while the expected runtime of both BrFS and \crrwspace decrease as goals are added at the goal depth, these additional goal states ``help" \crrwspace more than BrFS.
That is, the expected runtime of \crrwspace decreases faster than the expected runtime for BrFS with each additional goal state at the goal depth assuming everything else about the task stays the same.
We do so by showing that the expression for $\expect[\bfsrv(\task)] - \expect[\rrwrv(\task)]$ increases as the number of goal states (\textit{i.e.} \numgoals) at the goal depth increases.
Since both $\expect[\rrwrv(\task)$] and $\expect[\bfsrv(\task)]$ decrease with an increasing number of goals, the difference between these expectations can only increase if the expected runtime of \crrwspace is decreasing faster with $\numgoals$ than BrFS.
This is done by showing that the derivative of this expression with respect to $\numgoals$ is positive.
Note that in the below expression, $N$ takes on the role of $|\shallowstates|$, $D$ takes on the role of $|\goaldepthstates|$, and $L$ takes on the role of $\ell$.
These are renamed since they are constants in the provided general expression.

\begin{lemma} \label{lemma:dec_with_goals}
Consider the following expression where $N \geq 1$, $D \geq 1$, and $L \geq 2$:
\begin{align}
f(\numgoals) =  N + \frac{D + 1}{\numgoals + 1} - \frac{L D}{\numgoals} - 1
\end{align}
Then $\frac{df(\numgoals)}{d\numgoals} > 0$ for all $1 \leq g \leq D$.
\end{lemma}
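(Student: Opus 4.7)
The plan is to prove this by directly computing the derivative of $f(\numgoals)$ with respect to $\numgoals$ and showing the resulting expression is strictly positive on the given range. The constant term $N - 1$ disappears under differentiation, so only the two fractional terms contribute. Differentiating yields
\begin{align*}
\frac{df(\numgoals)}{d\numgoals} = -\frac{D+1}{(\numgoals+1)^2} + \frac{LD}{\numgoals^2},
\end{align*}
so showing positivity reduces to establishing the inequality $LD(\numgoals+1)^2 > (D+1)\numgoals^2$.

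The main step is then this algebraic inequality. The natural approach is to split off the two given hypotheses on the constants: since $L \geq 2$ and $D \geq 1$, we have $LD \geq 2D \geq D+1$; and clearly $(\numgoals+1)^2 > \numgoals^2$ for any positive $\numgoals$. Chaining these gives $LD(\numgoals+1)^2 \geq (D+1)(\numgoals+1)^2 > (D+1)\numgoals^2$, which is exactly the inequality needed. Note that the bound $1 \leq \numgoals \leq D$ is not actually used in the derivative argument beyond ensuring $\numgoals > 0$; any positive $\numgoals$ would work, so the stated range poses no additional constraint.

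I do not anticipate any real obstacle here; the whole argument is a one-line derivative computation followed by a short chain of inequalities using $L \geq 2$ and $D \geq 1$. The only mildly subtle point is recognizing that one must use both hypotheses together: $L \geq 2$ alone is not enough (since $D+1$ could exceed $D$), and $D \geq 1$ alone is not enough either, but combining them gives $LD \geq D+1$ comfortably. I would present the proof as: (i) differentiate, (ii) reduce positivity to the polynomial inequality, (iii) bound $LD$ below by $D+1$ using the hypotheses, (iv) bound $(\numgoals+1)^2$ below by $\numgoals^2$ and conclude.
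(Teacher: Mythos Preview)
Your proposal is correct, and in fact cleaner than the paper's argument, but it leans on different hypotheses. You exploit $L \geq 2$ together with $D \geq 1$ to obtain $LD \geq 2D \geq D+1$, and then the trivial bound $(\numgoals+1)^2 > \numgoals^2$ finishes the job; as you observe, this works for every $\numgoals > 0$ and never touches the upper constraint $\numgoals \leq D$. The paper instead drops $L$ immediately using only $L \geq 1$, combines the two terms over the common denominator $\numgoals^2(\numgoals+1)^2$, expands the numerator to $2\numgoals D - \numgoals^2 + D$, and then invokes $\numgoals \leq D$ to replace $D$ by $\numgoals$ and obtain $\numgoals^2 + \numgoals > 0$. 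So the two proofs trade hypotheses: yours needs the full strength of $L \geq 2$ but not the range bound on $\numgoals$, while the paper's works under the weaker $L \geq 1$ but genuinely requires $\numgoals \leq D$. Your route is shorter and avoids the fraction-combining algebra; the paper's route shows the lemma is slightly more robust in $L$ than its statement suggests.
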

\begin{proof}
The derivation is as follows:
\begin{align}
    \frac{df(\numgoals)}{d\numgoals} & = - \frac{D + 1}{(\numgoals + 1)^2} + \frac{LD}{\numgoals^2} \\
    & \geq - \frac{D + 1}{(\numgoals + 1)^2} + \frac{D}{\numgoals^2}  \label{eqn:ed_1}\\
    & = \frac{- g^2 D - g^2 + g^2 D + 2 g D+ D}{g^2(g+1)^2} \\
    & = \frac{2gD - g^2 + D}{g^2(g+1)^2} \geq \frac{2g^2 - g^2 + g}{g^2(g+1)^2} \label{eqn:bound_on_goaldepthsize}\\
    & \geq \frac{g^2 + g}{g^2(g+1)^2} > 0
\end{align}
Line \ref{eqn:ed_1} holds since $L \geq 1$, while line \ref{eqn:bound_on_goaldepthsize} follows since $\numgoals \leq D$.
The final line holds since $\numgoals \geq 1$.
\end{proof}

We can now prove our main result.
\begin{theorem} \label{thm:accurate_bound}
Let $\task$ be a search task on a tree with goal depth $\goaldepth \geq 2$ and $1 \leq \numgoals < |\goaldepthstates|$ goals at this depth.
Let $\goaldepthprob$ by the probability of an unbiased random walk of length $\walklength$ reaching the goal depth, where $0 < \goaldepthprob \leq 1$.
If the following condition on the number of goals is satisfied
\begin{align*}
\numgoals \geq \frac{\walklength |\goaldepthstates|}{\goaldepthprob(|\shallowstates| + \kappa - 1)}
\end{align*}
where $\kappa = \max(1, \frac{|\goaldepthstates| + 1}{\walklength|\goaldepthstates|/(\goaldepthprob|\shallowstates|) + 1})$, then $\expect[\bfsrv(\task_\bfactor)] \geq \expect[\rrwrv(\task_\bfactor)]$.
\end{theorem}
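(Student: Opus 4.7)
The plan is to proceed exactly as in the proof of Theorem~\ref{thm:simple_crossover_bound}, but without discarding the $(|\goaldepthstates|+1)/(\numgoals+1)$ summand when bounding $\expect[\bfsrv]$ from below. Combining Theorem~\ref{thm:brfs_exp}, Theorem~\ref{thm:rrw_exp}, and Lemma~\ref{lemma:succ_prob}, the inequality $\expect[\rrwrv(\task)] \le \expect[\bfsrv(\task)]$ reduces to
\begin{align*}
\frac{\walklength |\goaldepthstates|}{\goaldepthprob \numgoals} \;\le\; |\shallowstates| + \frac{|\goaldepthstates|+1}{\numgoals+1} - 1.
\end{align*}
Since the right-hand side also depends on $\numgoals$, solving directly yields an implicit bound. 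The strategy is to replace the fraction $(|\goaldepthstates|+1)/(\numgoals+1)$ by a constant lower bound $\kappa$, which turns the inequality into $\numgoals \ge \walklength|\goaldepthstates|/[\goaldepthprob(|\shallowstates|+\kappa-1)]$, matching the hypothesis of the theorem.

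The main obstacle is that the substitution is only valid when $\kappa \le (|\goaldepthstates|+1)/(\numgoals+1)$, equivalently $\numgoals \le (|\goaldepthstates|+1)/\kappa - 1$, so the chosen $\kappa$ must be simultaneously tight and a priori justifiable at the unknown $\numgoals$. My idea is to use the looser estimate $N_0 := \walklength|\goaldepthstates|/(\goaldepthprob|\shallowstates|)$ from Theorem~\ref{thm:simple_crossover_bound} as a proxy for $\numgoals$, setting $\kappa = (|\goaldepthstates|+1)/(N_0+1)$ whenever this value exceeds $1$ and falling back to the trivial lower bound $\kappa = 1$ otherwise---exactly the definition in the statement.

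I would then close the argument with a two-case analysis on the size of $\numgoals$ relative to $N_0$. If $\numgoals > N_0$, the hypothesis of Theorem~\ref{thm:simple_crossover_bound} is already met, so its conclusion applies directly. If instead $\numgoals \le N_0$, monotonicity of $(|\goaldepthstates|+1)/(x+1)$ in $x$ gives $(|\goaldepthstates|+1)/(\numgoals+1) \ge (|\goaldepthstates|+1)/(N_0+1)$, while the assumption $\numgoals < |\goaldepthstates|$ gives the trivial bound $(|\goaldepthstates|+1)/(\numgoals+1) \ge 1$. Together these justify substituting $\kappa = \max(1,(|\goaldepthstates|+1)/(N_0+1))$ into the reduced inequality, and applying the assumed lower bound on $\numgoals$ delivers the result.

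I expect the only subtlety to be recognizing why the $\max$ with $1$ is necessary: in the degenerate regime where $N_0 \ge |\goaldepthstates|$, the proxy $(|\goaldepthstates|+1)/(N_0+1)$ drops below $1$ and would shrink $|\shallowstates|+\kappa-1$, making the refined bound weaker than the simple one; capping $\kappa$ at $1$ ensures the theorem gracefully recovers Theorem~\ref{thm:simple_crossover_bound} there while strictly improving on it whenever the refinement is meaningful.
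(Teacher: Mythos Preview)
Your approach is correct and follows a somewhat different route from the paper. The paper proves the inequality only at the exact threshold value $\numgoals_0 = \walklength|\goaldepthstates|/[\goaldepthprob(|\shallowstates|+\kappa-1)]$, establishing there that $\kappa \le (|\goaldepthstates|+1)/(\numgoals_0+1)$ via the chain $\numgoals_0 \le N_0$ (which follows from $\kappa \ge 1$); it then invokes a separate derivative lemma (Lemma~\ref{lemma:dec_with_goals}) showing that $f(\numgoals)=|\shallowstates|+(|\goaldepthstates|+1)/(\numgoals+1)-L|\goaldepthstates|/\numgoals-1$ is increasing, to extend the conclusion to all $\numgoals \ge \numgoals_0$. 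Your two-case split dispenses with that lemma entirely: for $\numgoals \le N_0$ you get $\kappa \le (|\goaldepthstates|+1)/(\numgoals+1)$ directly from monotonicity of $x \mapsto (|\goaldepthstates|+1)/(x+1)$ (together with the trivial bound $\ge 1$ from $\numgoals < |\goaldepthstates|$), and for $\numgoals > N_0$ you simply fall back on Theorem~\ref{thm:simple_crossover_bound}. This is more elementary and makes the relationship to the simpler bound explicit. The paper's route, on the other hand, isolates the monotonicity of the gap $\expect[\bfsrv]-\expect[\rrwrv]$ as a standalone fact, which has some conceptual value even though it is not strictly needed here.
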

\begin{proof}
We will first show the statement holds in the case that $\numgoals$ at the goal depth is exactly equal to the expression in the theorem statement.
Then we will use Lemma \ref{lemma:dec_with_goals} to show it holds for more goals.

Assume that $\numgoals = (\walklength |\goaldepthstates|)/(\goaldepthprob(|\shallowstates| + \kappa - 1))$.
By Lemma \ref{lemma:succ_prob}, this means we can perform the following derivation on the success probability: 
\begin{align}
    \successprob & \geq \goaldepthprob \numgoals / |\goaldepthstates|\\
    & \geq \left[ \frac{\goaldepthprob \walklength |\goaldepthstates|}{\goaldepthprob(|\shallowstates| + \kappa - 1)} \right] / |\goaldepthstates|\\
    & \geq \frac{\ell}{|\shallowstates| + \kappa - 1} \label{line:eqn_with_kappa}
\end{align}
We will now show that $\kappa \leq (|\goaldepthstates| + 1)/(\numgoals + 1)$ so that we can use Theorem \ref{thm:general_success_prob} to get the desired result.
To do so, we first note that due to the $\max$, $\kappa \geq 1$.
As such, $\kappa - 1 \geq 0$, meaning the following derivation is possible:
\begin{align}
    \numgoals & = \frac{\walklength |\goaldepthstates|}{\goaldepthprob(|\shallowstates| + \kappa - 1)}\\
    & \leq \frac{\walklength |\goaldepthstates|}{\goaldepthprob|\shallowstates|}\\
    \numgoals + 1 & \leq \frac{\walklength |\goaldepthstates|}{\goaldepthprob|\shallowstates|} + 1\\
    \frac{|\goaldepthstates| + 1}{\numgoals + 1} & \geq \frac{|\goaldepthstates| + 1}{\walklength |\goaldepthstates|/(\goaldepthprob|\shallowstates|) + 1}\\
    \frac{|\goaldepthstates| + 1}{\numgoals + 1} & \geq \kappa
\end{align}
Along with line \ref{line:eqn_with_kappa}, this implies the following:
\begin{align}
    \successprob & \geq \frac{\ell}{|\shallowstates| + (|\goaldepthstates| + 1)/(\numgoals + 1) - 1}
\end{align}
By Theorem \ref{thm:general_success_prob}, this means that $\expect[\bfsrv(\task_\bfactor)] \geq \expect[\rrwrv(\task_\bfactor)]$ if $\numgoals = (\walklength |\goaldepthstates|)/(\goaldepthprob(|\shallowstates| + \kappa - 1))$.
Now by setting $N = |\shallowstates|$, $D = |\goaldepthstates|$, and $L = \walklength$, Lemma \ref{lemma:dec_with_goals} indicates that $\expect[\bfsrv(\task_\bfactor)] - \expect[\rrwrv(\task_\bfactor)]$ increases as $\numgoals$ increases. As such, if $\expect[\bfsrv(\task_\bfactor)] \geq \expect[\rrwrv(\task_\bfactor)]$ for $\numgoals = (\walklength |\goaldepthstates|)/(\goaldepthprob(|\shallowstates| + \kappa - 1))$, it is necessarily the case that $\expect[\bfsrv(\task_\bfactor)] \geq \expect[\rrwrv(\task_\bfactor)]$ for $\numgoals \geq (\walklength |\goaldepthstates|)/(\goaldepthprob(|\shallowstates| + \kappa - 1))$.
Thus the statement holds.
\end{proof}

\section{Implementation Details} \label{sec:appendix_ehc_implementation}
We implemented all algorithms in the FastDownward Planning Systems \cite{helmert-jair2006}. 
This included a re-implementation of standard EHC. 
The most notable change we made was removing the global node table which facilitated the closed list in the original implementation. 
As a result, if a state was found while exploring one UHR, it would be viewed as closed (and not re-expanded), if it was encountered later in another UHR.
The result is that the he closed list was sometimes ``blocking" the local BrFS. 
In some cases this lead to domains with no dead-ends, such as blocksworld, being unsolvable by EHC if the only solution path happened to pass through the global closed list. 
Thus, EHC was no longer complete on problems it should be.
In our implementation, each local search maintains a local open and closed list that are destroyed upon the next improving state being found. 
In some preliminary testing, our fresh implementation had consistently improved coverage compared to the original implementation.

The original implementation of EHC performed delayed heuristic evaluation and optionally used preferred operators.
As our study sought to isolate for the local search strategy, we removed these enhancements. 
We also implement EHC to do random tie-breaking.
This is done with two queues.
States in the current queue are randomly selected and expanded and their successors are placed in the next queue.
When the current queue is empty, the queues are swapped. 
Our implementation also performs goal testing when a state is generated. 

Code for our implementation has been included.
While we do not include the entire Fast Downward code base, we include all the files relevant for EHC and our EHC-RRWs.

\section{Additional Experiment Results} \label{sec:appendix_results}
We tested our algorithms using the Autoscale Benchmark 21.11 suite of domains \cite{torralba:autoscale}.
Both the optimal and agile tracks were used. Summarizes of the experiments were presented in the main text.
Here, we begin by showing the $h^+$ categorization of the 17 domains we focused our testing on.
These are shown in Table \ref{tab:taxonomy}.

Next, we show the full per-domain results.
These are shown in Table \ref{tab:classified_coverage}.
The number of problems attempted is indicated in parentheses.
While all domains contained 30 problems, some ran out of memory during translation and so fewer than 30 were attempted.

\begin{table}[htbp]
\small
\begin{tabular}{@{}l|l:l}
 & \multicolumn{1}{c|}{Bounded UHRs} & \multicolumn{1}{c}{Unbounded UHRs} \\ \hline
EHC-Complete & \begin{tabular}[c]{@{}l@{}}elevators\\ gripper\\ logistics \\ miconic\\ satellite \\zenotravel\end{tabular} & \begin{tabular}[c]{@{}l@{}}blocksworld\\ depots\\ driverlog\\grid\\pipesworld-Tank\\pipesworld-NoTank\\rovers\\transport\end{tabular} \\ \cline{1-1} \cdashline{2-3}
EHC-Incomplete &  & \begin{tabular}[c]{@{}l@{}}airport\\freecell\\mprime\end{tabular}

\end{tabular}
\caption{The taxonomy of the 17 Autoscale domains with a known classification for $h^+$.}\label{tab:taxonomy}
\end{table}

\begin{table*}[t!]
    \centering
    \small
    \begin{tabular}{l|r|rrr|rrr}
    \toprule
    & & \multicolumn{3}{c|}{EHC-\crrw} & \multicolumn{3}{c}{EHC-\lubyrrw} \\
    Domain & EHC & $\ell=10$ & $\ell=25$ & $\ell=50$ & $m = 1$ & $m = 2$ & $m = 4$\\ 
    \midrule
    ~~~~~~\textit{Optimal Track Problems} & & & & & & & \\
    elevators (30) & 30.0 & 30.0 & 30.0 & 30.0 & 30.0 & 30.0 & 30.0 \\
    gripper (30) & 30.0 & 30.0 & 30.0 & 30.0 & 30.0 & 30.0 & 30.0 \\
    logistics (30) & 30.0 & 30.0 & 30.0 & 30.0 & 30.0 & 30.0 & 30.0 \\
    miconic (30) & 30.0 & 30.0 & 30.0 & 30.0 & 30.0 & 30.0 & 30.0 \\
    satellite (30) & 30.0 & 30.0 & 30.0 & 30.0 & 30.0 & 30.0 & 30.0 \\
    zenotravel (30) & 30.0 & 30.0 & 30.0 & 30.0 & 30.0 & 30.0 & 30.0 \\
    \textbf{UHR-Bounded Total (180)} & 180.0 & 180.0 & 180.0 & 180.0& 180.0& 180.0& 180.0\\
    & & & & & & & \\
    blocksworld (30) & 24.4 & 18.4 & 25.8 & 28.4 & 26.6 & 26.6 & 26 \\
    depots (23) & 25.8 & 16.6 & 20.0 & 20.4 & 21.6 & 22.0 & 23.2 \\
    driverlog (30) & 30.0 & 30.0 & 30.0 & 30.0& 29.8 & 29.8 & 29.8 \\
    grid (25) & 29.8 & 28.2 & 29.8 & 30.0 & 29.8 & 29.8 & 29.8 \\
    pipesworld-notankage (30) & 29.8 & 25.4 & 29.8 & 30.0 & 30.0 & 30.0 & 30.0 \\
    pipesworld-tankage (30) & 19.8 & 17.8 & 23.4 & 24.2 & 25.0 & 25.0 & 25.2 \\
    rovers (30) & 30.0 & 30.0 & 30.0 & 30.0 & 30.0 & 30.0 & 30.0 \\
    transport (22) & 30.0 & 29.6 & 29.6 & 30.0 & 29.8 & 29.8 & 29.8 \\
    \textbf{UHR-Unbounded Total (240)} & 219.6 & 196.0 & 218.4 & 223.0 & 222.6 & 223.0 & 223.8 \\
    & & & & & & & \\
    airport & 8.2 & 3.4 & 2.6 & 2.6 & 7.2 & 6.0 & 5.0 \\
    freecell & 11.8 & 6.4 & 10.2 & 11.4 & 10.4 & 11.8 & 8.4 \\
    mprime & 21.6 & 14.4 & 12.2 & 11.0 & 18.2 & 16.8 & 15.4 \\
    \textbf{Incomplete Total (90)} & 41.6 & 24.2 & 25.0 & 25.0 & 35.8 & 34.6 & 28.8 \\
    \midrule
    \midrule
    ~~~~~~\textit{Agile Track Problems} & & & & & & & \\
    elevators (30) & 22.2 & 20.0 & 20.8 & 19.8 & 18.6 & 18.8 & 18.0 \\
    gripper (30) & 30.0 & 30.0 & 30.0 & 30.0 & 30.0 & 30.0 & 30.0 \\
    logistics (30) & 6.0 & 5.2 & 5.0 & 4.4 & 5.0 & 4.6 & 4.6 \\
    miconic (30) & 30.0 & 30.0 & 30.0 & 30.0 & 30.0 & 30.0 & 30.0 \\
    satellite (18) & 8.0 & 7.4 & 7.0 & 7.0 & 7.2 & 7.0 & 7.2 \\
    zenotravel (15) & 7.4 & 7.0 & 6.8 & 6.8 & 6.4 & 7.0 & 6.6 \\
    \textbf{UHR-Bounded Total (153)} & 103.6 & 99.6 & 99.6 & 98.0 & 97.2 & 97.4 & 96.4 \\
    &&&&&&&\\
    blocksworld (30) & 3.0 & 1.0 & 4.6 & 3.4 & 3.4 & 3.6 & 3.2 \\
    depots (23) & 7.4 & 5.0 & 5.8 & 6.4 & 6.6 & 6.8 & 5.8 \\
    driverlog (30) & 8.8 & 5.2 & 4.8 & 4.4 & 2.2 & 2.8 & 3.6 \\
    grid (25) & 6.4 & 4.6 & 6.0 & 6.4 & 6.0 & 6.0 & 6.2 \\
    pipesworld-notankage (30) & 22.6 & 16.0 & 22.0 & 23.0 & 29.4 & 29.0 & 29.0 \\
    pipesworld-tankage (30) & 13.6 & 15.0 & 23.6 & 23.6 & 25.8 & 25.8 & 25.2 \\
    rovers (30) & 30.0 & 30.0 & 30.0 & 30.0 & 30.0 & 30.0 & 30.0 \\
    transport (22) & 5.4 & 3.4 & 4.0 & 4.2 & 4.4 & 3.8 & 4.0 \\
    \textbf{UHR-Unbounded Total (220)}& 97.2 & 80.2 & 100.8 & 101.4 & 107.8 & 107.8 & 107.0 \\
    &&&&&&&\\
    airport (30) & 8.6 & 3.0 & 2.0 & 3.0 & 7.2 & 5.0 & 5.2 \\
    freecell (30) & 9.6 & 10.4 & 9.2 & 8.8 & 9.6 & 9.8 & 9.8 \\
    mprime (30) & 6.8 & 4.4 & 4.4 & 2.4 & 4.8 & 5.0 & 4.8 \\
    \textbf{Incomplete Total (90)} & 25.0 & 17.8 & 15.6 & 14.2 & 21.6 & 19.8 & 19.8 \\
    \bottomrule
    \end{tabular}
    \caption{Coverage of EHC and the EHC-RRW variants on the 17 domains in the Autoscale benchmark suite classified in the $h^+$ taxonomy. Includes problems from both the optimal and agile track.}
    \label{tab:classified_coverage}
\end{table*}

\end{document}